\documentclass[twoside]{article}

\usepackage[accepted]{aistats2022}
% If your paper is accepted, change the options for the package
% aistats2022 as follows:
%
%\usepackage[accepted]{aistats2022}
%
% This option will print headings for the title of your paper and
% headings for the authors names, plus a copyright note at the end of
% the first column of the first page.

% If you set papersize explicitly, activate the following three lines:

\setlength{\pdfpageheight}{11in}
\setlength{\pdfpagewidth}{8.5in}

% If you use natbib package, activate the following three lines:
\usepackage[round]{natbib}

% If you use BibTeX in apalike style, activate the following line:
\bibliographystyle{apalike}

% Other packages
\usepackage[hidelinks]{hyperref}
\usepackage{url}
\usepackage{graphicx,subfig}
\usepackage{tikz}
\usepackage{todonotes}
\usepackage{amsfonts,amsmath,dsfont}
\usepackage{cuted}
\usepackage{amsthm}

\newtheorem{theorem}{Theorem}

% Math symbols
\newcommand{\D}{\mathrm{d}}
\newcommand{\e}{\mathrm{e}}

\newcommand{\KLD}{\mathrm{KLD}}
\newcommand{\Cov}{\mathrm{Cov}}
\newcommand{\E}{\mathds{E}}
\newcommand{\norm}[1]{\left\lVert#1\right\rVert}

\begin{document}

% If your paper is accepted and the title of your paper is very long,
% the style will print as headings an error message. Use the following
% command to supply a shorter title of your paper so that it can be
% used as headings.
%
%\runningtitle{I use this title instead because the last one was very long}

% If your paper is accepted and the number of authors is large, the
% style will print as headings an error message. Use the following
% command to supply a shorter version of the authors names so that
% they can be used as headings (for example, use only the surnames)
%
%\runningauthor{Surname 1, Surname 2, Surname 3, ...., Surname n}

\twocolumn[

\aistatstitle{Resampling Base Distributions of Normalizing Flows}

\aistatsauthor{Vincent Stimper$^{1,2}$ \And Bernhard Schölkopf$^1$ \And  José Miguel Hernández-Lobato$^2$}
\runningauthor{Vincent Stimper, Bernhard Schölkopf, José Miguel Hernández-Lobato}

\aistatsaddress{$^1$Max Planck Institute for \\ Intelligent Systems, Germany \And  $^2$University of Cambridge, \\ United Kingdom} ]

\begin{abstract}
  Normalizing flows are a popular class of models for approximating probability distributions. However, their invertible nature limits their ability to model target distributions whose support have a complex topological structure, such as Boltzmann distributions. Several procedures have been proposed to solve this problem but many of them sacrifice invertibility and, thereby, tractability of the log-likelihood as well as other desirable properties. To address these limitations, we introduce a base distribution for normalizing flows based on learned rejection sampling, allowing the resulting normalizing flow to model complicated distributions without giving up bijectivity. Furthermore, we develop suitable learning algorithms using both maximizing the log-likelihood and the optimization of the Kullback-Leibler divergence, and apply them to various sample problems, i.e.\ approximating 2D densities, density estimation of tabular data, image generation, and modeling Boltzmann distributions. In these experiments our method is competitive with or outperforms the baselines.
\end{abstract}

\section{INTRODUCTION}

Inferring and approximating probability distributions is a central problem of unsupervised machine learning. A popular class of models for this task are normalizing flows \citep{Tabak2010,Tabak2013,Rezende2015}, which are given by an invertible map transforming a simple base distribution such as a Gaussian to obtain a complex distribution matching our target. Normalizing flows have been applied successfully to a variety of problems, such as image generation \citep{Dinh2015,Dinh2017,Kingma2018,Ho2019,Grcic2021}, audio synthesis \citep{Oord2018}, variational inference \citep{Rezende2015}, semi-supervised learning \citep{Izmailov2020} and approximating Boltzmann distributions \citep{Noe2019,Wu2020,Wirnsberger2020} among others \citep{Papamakarios2021}. However, with respect to some performance measures they are still outperformed by autoregressive models \citep{Chen2018,Parmar2018,Child2019}, generative adversarial networks (GANs) \citep{Gulrajani2017,Karras2019,Karras2020a,Karras2020}, and diffusion based models \citep{Sohl-Dickstein2015,Kingma2021}. One reason for this is an architectural limitation. Due to their bijective nature the normalizing flow transformation leaves the topological structure of the support of the base distribution unchanged and, since it is usually simple, there is a topological mismatch with the often complex target distribution \citep{Cornish2020}, thereby diminishing the modeling performance and even causing exploding inverses \citep{Behrmann2021}. Several solutions have been proposed, e.g.\ augmenting the space the model operates on \citep{Huang2020}, continuously indexing the flow layers \citep{Cornish2020}, and adding stochastic \citep{Wu2020} or surjective layers \citep{Nielsen2020}. However, these approaches sacrifice the bijectivity of the flow transformation, which means in most cases that the model is no longer tractable, memory savings during training are no longer possible \citep{Gomez2017}, and the model is no longer a perfect encoder-decoder pair. Some work has been done on using multimodal base distributions \citep{Izmailov2020,Ardizzone2020,Hagemann2021}, but the intention was to do classification or solve inverse problems with flow-based models and not to capture the inherent multimodal nature of the target distribution. \cite{Papamakarios2017} took a mixture of Gaussians as base distribution and showed that this can improve the performance.

In this work, we develop a method to obtain a more expressive base distribution through learned accept/reject sampling (LARS) \citep{Bauer2019}. It can be estimated jointly with the flow map by either maximum likelihood (ML) learning or minimizing the Kullback-Leibler (KL) divergence, matching the topological structure of the target's support. Moreover, we propose how the method can be scaled up to high dimensional datasets and demonstrate the effectiveness of our procedure on the tasks of learning 2D densities, estimating the density of tabular data, generating images, and the approximation of a 22 atom molecule's Boltzmann distribution.

\section{BACKGROUND}

\subsection{Normalizing Flows}
\label{sec:back_nf}

Let $z$ be a random variable taking values in $\mathds{R}^d$, having the density $p_\phi(z)$ parameterized by $\phi$. Furthermore, let $F_\theta:\mathds{R}^d \rightarrow \mathds{R}^d$ be a bijective map parameterized by $\theta$. We can compute the tractable density of the new random variable $x := F_\theta(z)$ with the change of variables formula
\begin{equation}
	p(x) = p_\phi(z) \left| \det(J_{F_\theta}(z))\right| ^{-1},
\end{equation}
where $J_{F_\theta}$ is the Jacobian matrix of $F_\theta$. This way of constructing a complex probability distribution $p(x)$ from a simple \emph{base distribution} $p(z)$ is called a \emph{normalizing flow}. We can use them to approximate a target density $p^*(x)$, which is done by optimizing a training objective. If the target density is unknown but samples from the corresponding distribution are available, we maximize the expected log-likelihood (LL) of the model
\begin{equation}
    \text{LL}(\theta, \phi) = \mathds{E}_{p^*(x)}\left[ \log\left( p(x)\right) \right].
    \label{equ:def_ll}
\end{equation}
Conversely, if the target density is given, we minimize the (reverse) KL divergence\footnote{For simplicity, we will call it just \emph{KL divergence} from now on.} \citep{Papamakarios2021}
\begin{equation}
    \KLD(\theta, \phi) := \mathds{E}_{p(x)}\left[ \log p(x) \right] - \mathds{E}_{p(x)}\left[ \log p^*(x) \right] ,
    \label{equ:def_rkld}
\end{equation}
or another difference measure for probability distributions such as the $\alpha$-divergence \citep{Hernandez-Lobato2016}.

\begin{figure}
	\centering
	\subfloat[]{
		\includegraphics[width=0.3\linewidth]{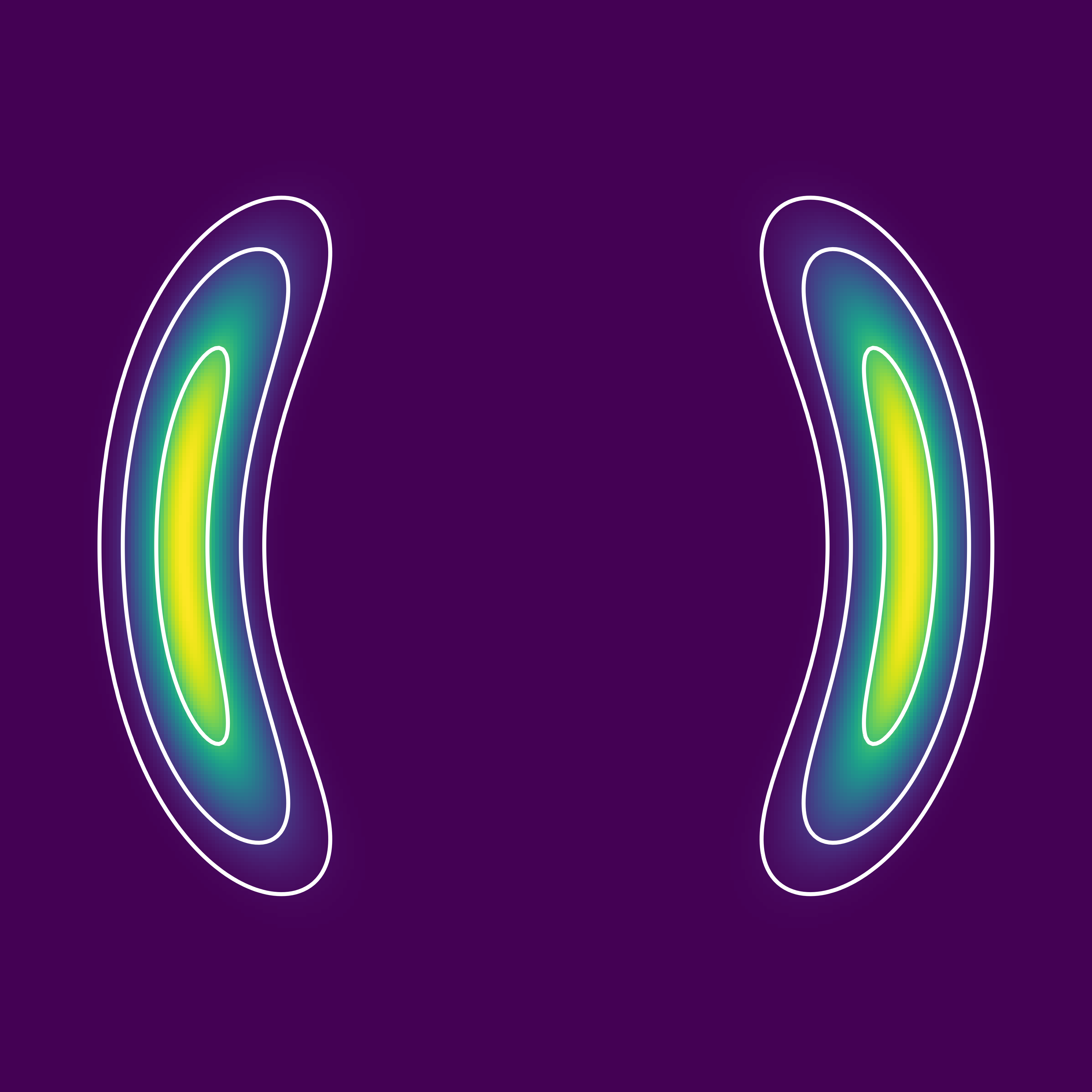}
	}
	\subfloat[]{
		\includegraphics[width=0.3\linewidth]{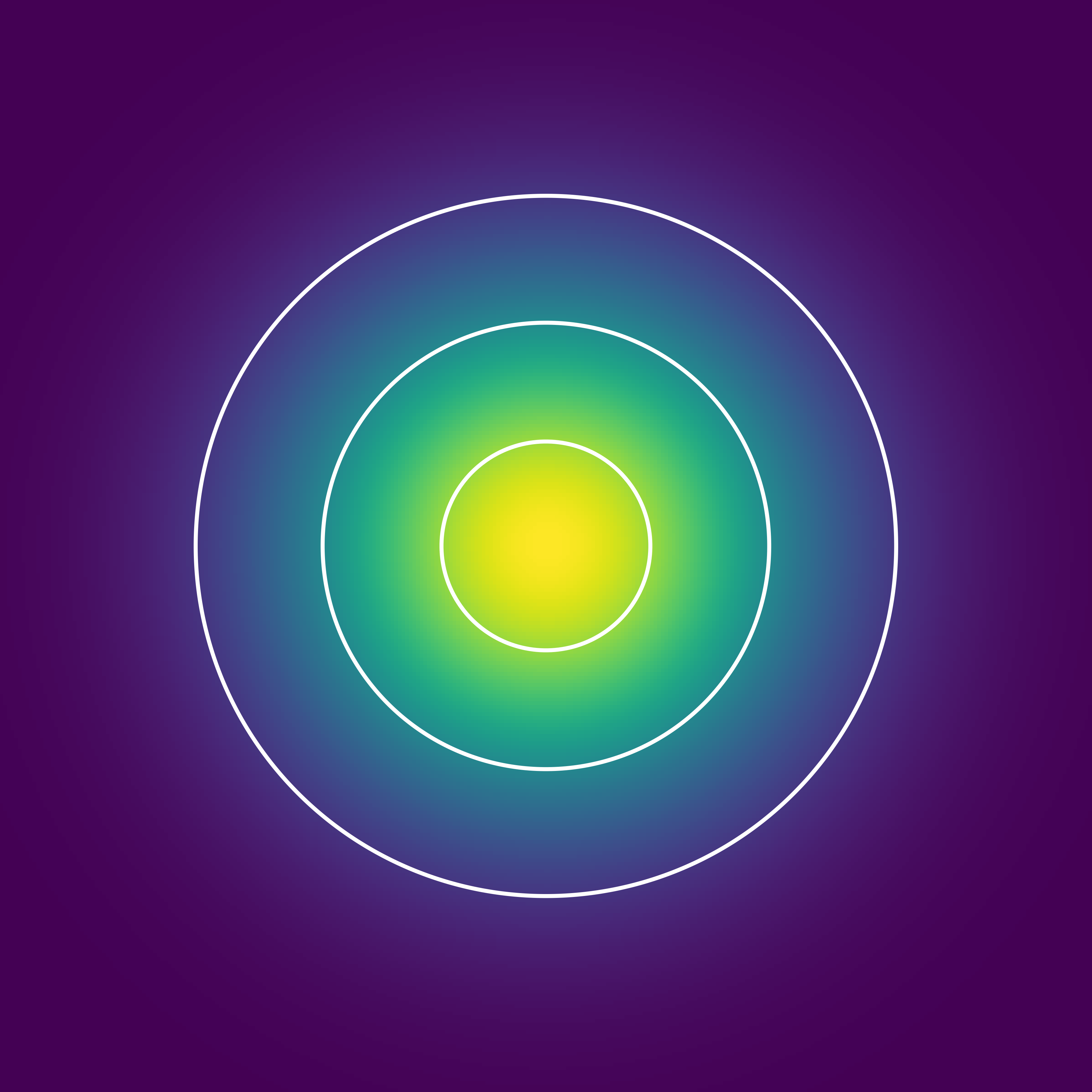}
	}
	\subfloat[]{
		\includegraphics[width=0.3\linewidth]{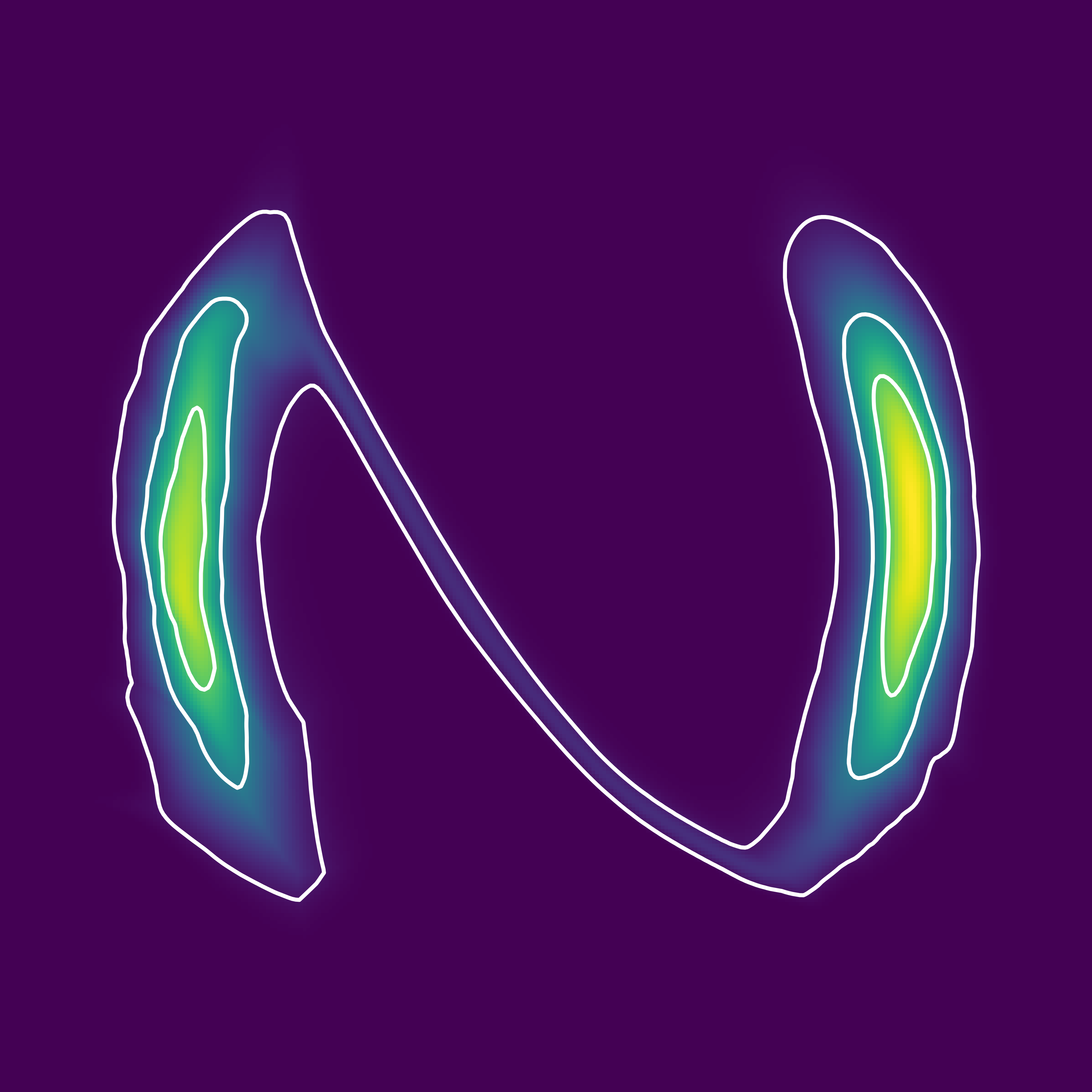}
	}
	\caption{Illustration of the architectural limitation of normalizing flows. (a) depicts the multimodal target distribution, (b) the Gaussian base distribution used, and (c) the learned real NVP model. The model's support has one connected component with a density filament between the modes.}
	\label{fig:artefacts_demo}
\end{figure}

To deal with high dimensional data, such as images, the multiscale architecture was introduced by \cite{Dinh2017}. As sketched in \autoref{fig:multscale_arch}, at the first level, the entire input $x$ is transformed by several flow layers $F_1$. The result is split up into two parts, $h_1^{(1)}$ and $h_1^{(2)}$. For images, this is typically done by first squeezing the image, i.e.\ reducing its height and width by a factor 2 and adding the surplus pixels as additional channels, and then splitting the resulting tensor along the channel dimension. $h_1^{(1)}$ is immediately factored out in the density, while $h_2^{(2)}$ is further transformed by the next set of flow layers $F_2$. The process is then repeated until a desired depth is reached. The full density for a multiscale architecture with $n$ levels is given by
\begin{equation}
	p(x) = \prod_{i=1}^{n} \left| \det\left( J_{F_i}(h_{i-1})\right) \right| p(z_i),
\end{equation}
where we set $h_0 = x$.

Normalizing flows can compete with other machine learning models on many benchmarks \citep{Papamakarios2021}. However, their performance is still impaired by an architectural weakness. The transformations defining a normalizing flow are invertible and such maps leave the topology of the sets they map unchanged \citep{Runde2005}. Consequently, the topological structure of the support of $p(z)$ is the same as that of $p(x)$. Usually, the base distribution is a Gaussian, which has only one mode, so its support consists of one connected component but the target distribution might be multimodal with the density between the modes being close to zero or even numerically zero due to finite precision so that the support consists of multiple disconnected components. As an exemplification we fit a real-valued non-volume preserving (real NVP) flow model with 8 coupling layers to a multimodal target distribution, see \autoref{fig:artefacts_demo}. The density of the trained model consists of one connected component covering the modes of the target, but connecting them via a density filament. Certain flow-based models, such as the residual flow \citep{Behrmann2019,Chen2019a}, can only converge to the target if they become non-invertible due to the topological mismatch \citep{Cornish2020}, thereby causing unstable training behaviour \citep{Behrmann2021}. Proposed solutions include increasing the model size significantly \citep{Chen2019a}, but this increases the computational cost and memory demand while the stability issues persist. Training can be stabilized via a suitable regularization, but this reduces the performance \citep{Behrmann2021}. Other approaches are discussed in Section \ref{sec:discussion}.

\subsection{Learned accept/reject sampling}
\label{sec:back_rejection_sampling}

Learned accept/reject sampling (LARS) is a method to approximate a $d$-dimensional distribution $q(z)$ by reweighting a proposal distribution $\pi(z)$ through a learned acceptance function $a_\phi: \mathds{R}^d \rightarrow [0,1]$, where $\phi$ are the learned parameters \citep{Bauer2019}. Given a sample $z_i$ from $\pi$, we will accept it with a probability $a_\phi(z_i)$, otherwise we reject it and draw a new sample until we accept one of the proposed samples. The resulting distribution is given by
\begin{equation}
	p_\infty(z) = \frac{\pi(z)a_\phi(z)}{Z};\quad\quad Z := \int\pi(z)a_\phi(z) \D z.
	\label{equ:lars_dens_inf}
\end{equation}
In order to limit the computational cost caused by high rejection rates, \cite{Bauer2019} introduced a truncation parameter $T\in\mathds{N}$. If the first $T-1$ samples from the proposal get rejected, we accept the $T^\text{th}$ sample no matter the value of the learned acceptance probability. Through this intervention, we alter the final sampling distribution to become
\begin{equation}
	p_T(z) = (1 - \alpha_T) \frac{a_\phi(z) \pi(z)}{Z} + \alpha_T \pi(z),
	\label{sec:lars_dens_trunc}
\end{equation}
where $\alpha_T := (1-Z)^{T-1}$, which reduces to \eqref{equ:lars_dens_inf} for $T\rightarrow\infty$. The integral \eqref{equ:lars_dens_inf} defining $Z$ is not tractable, so we cannot compute it directly. Instead, it is estimated via Monte Carlo sampling, i.e.
\begin{equation}
	Z \approx \frac{1}{S} \sum_{s = 1}^{S} a_\phi(z_s),
	\label{equ:lars_z_mc}
\end{equation}
where $z_s \sim \pi(z)$, which needs to be recomputed in every training iteration, as parameter changes in $a_\phi$ cause a change in $Z$.

LARS was first used to create a more expressive prior for variational autoencoders (VAEs) \citep{Kingma2014}, making it closer to the aggregate posterior distribution, thereby bringing the approximate posterior distribution closer to the ground truth. The resampled priors are trained jointly with the likelihood and the approximate posterior via maximization of the evidence lower bound. Since this only requires to evaluate the density of the prior at the data points, it is not even required to perform rejection sampling during training; therefore, the computational cost of training the whole model is only increased slightly.

\section{METHOD}
\subsection{Resampled base distributions}
\label{sec:method_basic_idea}

In Section \ref{sec:back_nf}, we argued that the topological structure of the support of the base distribution equals that of the overall flow distribution. To avoid artefacts resulting from mismatches between them, we aim to make the latter closer to the former. Therefore, we resample the base distribution with LARS, i.e.\ use it as our proposal so that its density becomes \eqref{sec:lars_dens_trunc}. Since there are no restrictions on the acceptance function $a_\phi$, we can use an arbitrarily complex neural network to model any desired topological structure. The resulting log-probability of the model is given by
\begin{equation}
    \begin{split}
        \log p(x) = & \log\pi(z) + \log\left( \alpha_T + (1 - \alpha_T)\frac{a_\phi(z)}{Z}\right) \\
        & - \log\left| \det J_{F_\theta}(z)\right| ,
    \end{split}
    \label{equ:flow_dist}
\end{equation}
where $F_\theta$ is the flow transformation, i.e.\ the composition of all flow layers, and $z = F_\theta^{-1}(x)$. In our case, the proposal is a Gaussian but it could be any other distribution or a more complicated model, such as a mixture of Gaussians or an autoregressive model. Depending on the application, $a_\phi$ will be a fully connected or a convolutional neural network, and details about how the architecture can be chosen are given in Appendix \ref{sec:a_architecture}. Since the evaluation of $a_\phi$ can be parallelized over the number of dimensions of the data $d$, we only add a constant computational overhead to our model. In contrast, autoregressive models scale linearly with $d$. We can sample from the model by performing LARS and propagating the accepted values through the flow map. The rejection rate, and hence the sampling speed, can be controlled via the truncation parameter $T$, which we set to $100$ in our experiments unless otherwise stated, but also through adding $Z$ to our loss function, which is discussed in Appendix \ref{sec:a_rejection_rate}.

Usually, the base distribution of normalizing flows has mean and variance parameters being trained with the flow layer parameters. Our proposal is simply a standard normal distribution, i.e.\ a diagonal Gaussian with mean zero and variance one. Thereby, we ensure that the samples from the proposal, which are the input for the neural network representing the learned acceptance probability $a_\phi$, come from a distribution which does not change during training. Instead, the mean and variance of the distribution can be altered after the resampling process by applying an affine flow layer with scale and shift being learnable parameters.

Note that while we retain the invertiblility of the flow, the probability distribution \eqref{equ:flow_dist} cannot be evaluated exactly since $Z$ needs to be estimated via \eqref{equ:lars_z_mc}. However, for large $T$ the base distribution reduces to \eqref{equ:lars_dens_inf} and, hence, we are only off by a constant meaning there would not be a bias when doing importance sampling, which is crucial for applications such as Boltzmann generators, see Section \ref{sec:boltz_gen}.

\subsection{Learning algorithms}
\label{sec:learning_algs}

The resampled base distribution can be trained jointly with the flow layers of our model. Both, the expected LL and the KL divergence, can be used as objectives. The former corresponds to maximizing \eqref{equ:def_ll}, which is done via stochastic gradient decent. As done by \cite{Bauer2019}, we sample from the proposal in each iteration to estimate the gradient of $Z$ with respect to the parameters, see \eqref{equ:lars_z_mc}. To stabilize training, we estimate the value of the normalization constant by an exponential moving average, see Appendix \ref{sec:ml_learn} for more details.

When the unnormalized target density $\hat{p}^*(x)$ is known, we can use the KL divergence \eqref{equ:def_rkld} as our objective. However, because sampling from the base distribution includes an acceptance/rejection step, we cannot apply the reparameterization trick \citep{Kingma2014} to obtain the gradients with respect to the model parameters. Instead, we derive an expression of the gradients of the KL divergence similar to that introduced by \cite{Grover2018b}.
\begin{theorem}
	\label{thm:rkld_grad}
	Let $p_\phi(z)$ be the base distribution of a normalizing flow, having parameters $\phi$, and $F_\theta$ be the respective invertible mapping, depending on its parameters $\theta$, such that the density of the model is
	\begin{equation}
		\log\left( p(x) \right) = \log\left( p_\phi(z) \right) - \log\left| \det J_{F_\theta}(z)\right| ,
		\label{equ:flow_with_param}
	\end{equation}
	with $x = F_\theta(z)$. Then, the gradients of the KL divergence with respect to the parameters are given by
	\begin{align}
	    \begin{split}
		\nabla_\phi \KLD(\theta, \phi) &= \Cov_{p_\phi(z)}\big\{ \nabla_\phi \log p_\phi(z), \\\log\left( p_\phi(z) \right) -& \log\left| \det J_{F_\theta}(z)\right|  - \log \hat{p}^*(F_\theta(z))\big\}
		\end{split}\label{equ:rkld_grad1}
		\\
		\begin{split}
		\nabla_\theta \KLD(\theta, \phi) &= -\E_{p_\phi(z)}\big[ \nabla_\theta \big( \log \hat{p}^*(F_\theta(z))  \\
		&\phantom{=-\E_{p_\phi(z)}\big[}+\log\left| \det J_{F_\theta}(z)\right| \big) \big]
		\end{split}\label{equ:rkld_grad2}
	\end{align}
\end{theorem}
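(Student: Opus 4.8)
The plan is to rewrite the KL divergence as a single expectation under the base distribution $p_\phi$ and then differentiate, handling the $\phi$-dependence of the sampling distribution with the score-function (REINFORCE) identity. Substituting the change of variables \eqref{equ:flow_with_param} into the definition \eqref{equ:def_rkld}, and writing $p^*(x) = \hat{p}^*(x)/Z^*$ with normalizer $Z^*$ independent of $(\theta,\phi)$, I would first establish
\[
\KLD(\theta,\phi) = \E_{p_\phi(z)}\!\left[ \log p_\phi(z) - \log\left|\det J_{F_\theta}(z)\right| - \log\hat{p}^*(F_\theta(z)) \right] + \log Z^* .
\]
Denote the bracketed integrand by $g_{\theta,\phi}(z)$; the additive constant $\log Z^*$ disappears under any gradient, which is why $\hat{p}^*$ may replace $p^*$ in the final formulas.

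For $\nabla_\theta$ the sampling density $p_\phi$ carries no $\theta$-dependence, so, under the usual regularity conditions allowing the interchange of $\nabla_\theta$ and $\int\cdot\,\D z$, the derivative acts only on $g_{\theta,\phi}$; since $\log p_\phi(z)$ is $\theta$-free this immediately gives \eqref{equ:rkld_grad2}.

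For $\nabla_\phi$ I would split $\nabla_\phi \int p_\phi(z)\, g_{\theta,\phi}(z)\, \D z$ by the product rule into a term where the gradient hits $p_\phi$ and one where it hits $g_{\theta,\phi}$. In the first, the identity $\nabla_\phi p_\phi = p_\phi\, \nabla_\phi \log p_\phi$ turns it into $\E_{p_\phi(z)}[\nabla_\phi \log p_\phi(z)\, g_{\theta,\phi}(z)]$. In the second, only the $\log p_\phi$ summand of $g_{\theta,\phi}$ depends on $\phi$, so it collapses to $\E_{p_\phi(z)}[\nabla_\phi \log p_\phi(z)] = \nabla_\phi \int p_\phi(z)\,\D z = \nabla_\phi 1 = 0$. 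Hence $\nabla_\phi \KLD = \E_{p_\phi(z)}[\nabla_\phi \log p_\phi(z)\, g_{\theta,\phi}(z)]$. Finally, reusing $\E_{p_\phi(z)}[\nabla_\phi \log p_\phi(z)] = 0$, I may subtract the baseline $\E_{p_\phi(z)}[g_{\theta,\phi}(z)]$ from the second factor without changing the value, which is exactly $\Cov_{p_\phi(z)}\{\nabla_\phi \log p_\phi(z),\, g_{\theta,\phi}(z)\}$, i.e.\ \eqref{equ:rkld_grad1}.

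The computation itself is short; the main obstacle is the analytic bookkeeping rather than the algebra — namely justifying the two interchanges of differentiation and integration (a dominated-convergence / Leibniz argument needing, say, $p_\phi$ differentiable in $\phi$ with an integrable dominating function for $\nabla_\phi p_\phi(z)\, g_{\theta,\phi}(z)$, and an analogous condition in $\theta$), which I would record as standing assumptions rather than belabour. I would also note that passing from the plain score-function expectation to the covariance form is precisely the standard control-variate/baseline trick: both expressions are unbiased estimators of the gradient, but centring $g_{\theta,\phi}$ reduces the variance of the Monte Carlo estimate used during training.
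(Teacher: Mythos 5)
Your proposal is correct and follows essentially the same route as the paper's proof: plug the flow density into the KL divergence to write it as a single expectation under $p_\phi$, differentiate in $\theta$ directly, and obtain the $\phi$-gradient via the score-function identity $\nabla_\phi p_\phi = p_\phi \nabla_\phi \log p_\phi$ together with $\E_{p_\phi(z)}[\nabla_\phi \log p_\phi(z)] = 0$, which also yields the covariance (baseline) form. Your explicit handling of the normalizer $Z^*$ and the integration/differentiation interchange are minor additions in rigor, not a different argument.
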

The proof is given in Appendix \ref{sec:rkld_proof}. We will use \eqref{equ:rkld_grad1} and \eqref{equ:rkld_grad2} to compute the gradients of the KL divergence in our experiments and, thereby, demonstrate its effectiveness.

\subsection{Application to multiscale architecture}
\label{sec:method_multiscale}

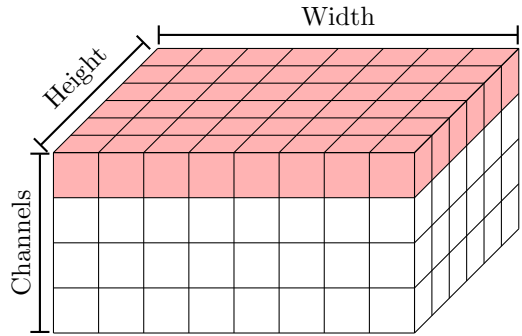
\begin{figure}
	\centering
	\begin{tikzpicture}[scale=0.6]
		% Set block dimensions
		\pgfmathsetmacro{\h}{6}
		\pgfmathsetmacro{\w}{8}
		\pgfmathsetmacro{\c}{4}
		
		% Marked block
		\fill[red!30] (0, \c - 1, \h) rectangle (\w, \c, \h);
		\fill[red!30] (\w, \c, \h) -- (\w, \c, 0) -- (0, \c, 0) -- (0, \c, \h) -- (\w, \c, \h);
		\fill[red!30] (\w, \c, \h) -- (\w, \c, 0) -- (\w, \c - 1, 0) -- (\w, \c - 1, \h) -- (\w, \c, \h);
		
		% Draw cube
		\foreach \x in {0,...,\w} {
			\draw (\x ,0  ,\h ) -- (\x ,\c ,\h );
			\draw (\x ,\c ,\h ) -- (\x ,\c ,0  );
		}
		\foreach \x in {0,...,\c} {
			\draw (\w ,\x ,\h ) -- (\w ,\x ,0  );
			\draw (0  ,\x ,\h ) -- (\w ,\x ,\h );
		}
		\foreach \x in {0,...,\h} {
			\draw (\w ,0  ,\x ) -- (\w ,\c ,\x );
			\draw (0  ,\c ,\x ) -- (\w ,\c ,\x );
		}
	
		% Dimension marks
		\draw[thick, |-|] (-0.3, 0, \h) -- (-0.3, \c, \h) node [midway, above, sloped] {Channels};
		\draw[thick, |-|] (-0.3, \c + 0.15, 0) -- (-0.3, \c + 0.15, \h) node [midway, above, sloped] {Height};
		\draw[thick, |-|] (0, \c + 0.3, 0) -- (\w, \c + 0.3, 0) node [midway, above, sloped] {Width};
	\end{tikzpicture}
	\caption{Visualization of a feature map when processing an image in a machine learning model. The unit which is used for factorization in our resampled base distribution is shown in red.}
	\label{fig:image_tensor_channel}
\end{figure}

LARS cannot be applied to very high dimensional distributions because we have to estimate $Z$ and its gradients via Monte Carlo sampling and the number of samples needed grows exponentially with the number of dimensions \citep{Bauer2019}. Although the base distribution of a normalizing flow must have the same number of dimensions as the target, we can reduce the number of dimensions significantly by factorization. Therefore, we extend the multiscale architecture, see Section \ref{sec:back_nf} and \citep{Dinh2017}, by further subdividing the base distribution at each level into factors with less than 100 dimensions. First, we squeeze the feature map until the product of height and width is smaller than 100. Then, each channel is treated as a separate factor, see \autoref{fig:image_tensor_channel}. To reduce the complexity of the model, we use parameter sharing to express the distribution of factors, i.e.\ there is one neural network per level with multiple outputs, each representing the acceptance probability $a_\phi$ for one channel. This also has the advantage that we can estimate the normalization constant and its gradient of all factors of one level in parallel by sampling from a Gaussian, passing the samples through the neural network and computing the average for each output dimension separately. As mentioned in Section \ref{sec:method_basic_idea}, the mean and variance is added via a constant coupling layer. Furthermore, the base distribution can be made class-conditional by making the mean and variance and/or $a_\phi$ dependent on the class. The latter can be efficiently achieved by adding more outputs to the neural network to have one value for $a_\phi$ per class and distribution if needed.

\section{EXPERIMENTS}

\subsection{2D distributions}
\label{sec:exp_2d}

\begin{figure*}[h]
    \centering
    \includegraphics[width=\linewidth]{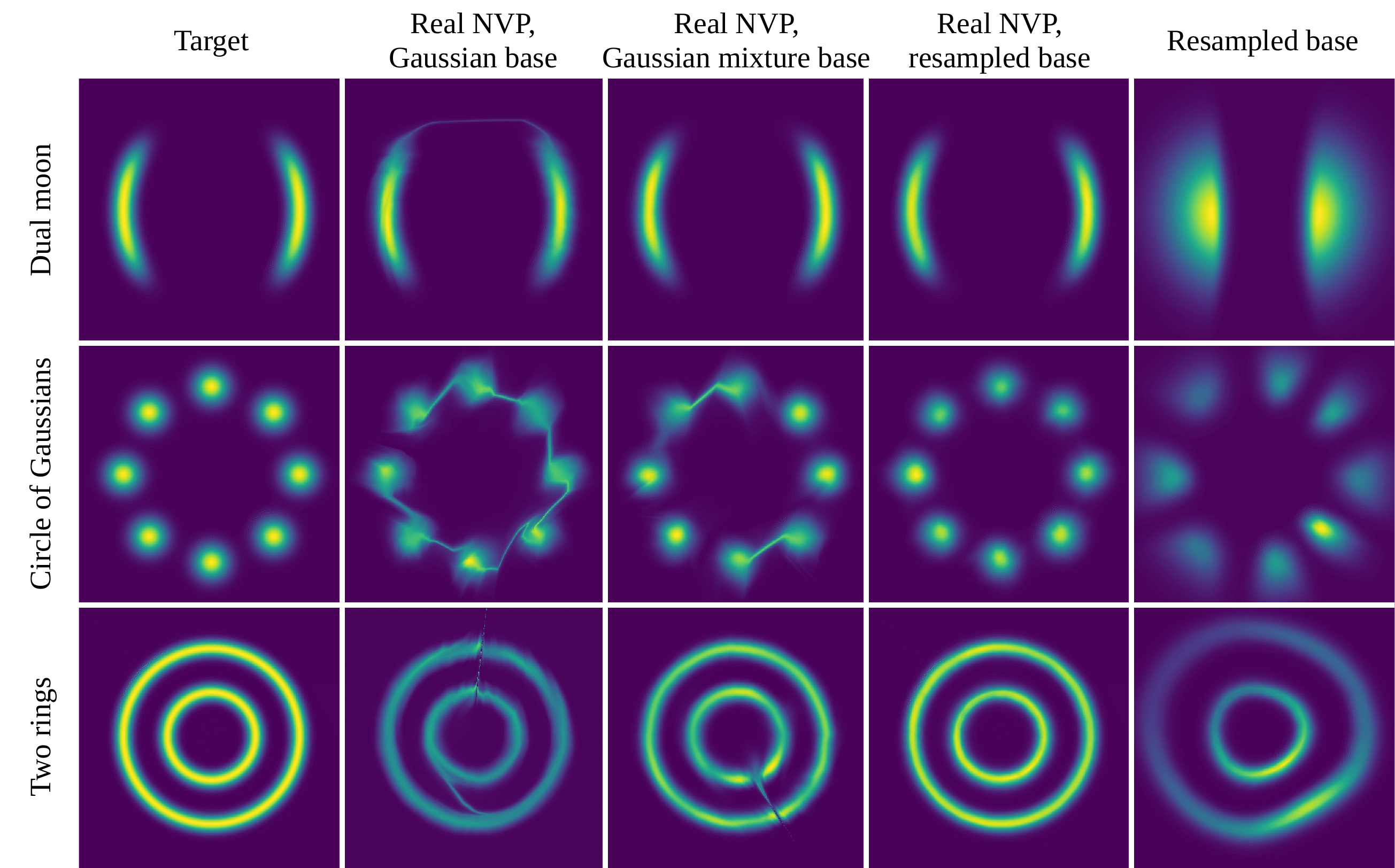}
    \caption{Visualization of the real NVP densities as well as the learned resampled base distribution when approximating three 2D distributions with complex topological structure. The models are trained via ML learning.}
    \label{fig:2d_rnvp_fkld}
\end{figure*}

\begin{table*}[h!]
  \caption{KL divergences of the target distribution and the flow models which are trained to approximate the three 2D distributions, shown in \autoref{fig:2d_rnvp_fkld}, with ML learning. For each target distribution and flow architecture, the model with the lowest KL divergence is marked in \textbf{bold}.}
  \label{tab:2d_fkld_kld}
  \centering
  \vspace{0.3cm}
  \begin{tabular}{l|lll|lll}
    Flow architecture & Real NVP & Real NVP & Real NVP & Residual & Residual & Residual \\
    Base distribution & Gaussian & Mixture & Resampled & Gaussian & Mixture & Resampled \\
    \hline 
    Dual moon & 1.83 & 1.80 & \textbf{1.77} & 1.82 & 1.80 & \textbf{1.76} \\
    Circle of Gaussians & 0.090 & 0.060 & \textbf{0.043} & 0.045 & 0.042 & \textbf{0.039} \\
    Two rings & 10.7 & 10.6 & \textbf{10.4} & 11.7 & 10.8 & \textbf{10.4}
  \end{tabular}
\end{table*}

\begin{table*}[t]
  \caption{LL on the test sets of the respective datasets of NSF, its CIF variant, and a NSF with a resampled base distribution (RBD). The values are averaged over 3 runs each and the standard error is given as a measure of uncertainty. The highest values within the confidence interval are marked in \textbf{bold}.}
  \label{tab:uci_ll}
  \centering
  \vspace{0.3cm}
  \begin{tabular}{l|llll}
    Method & Power & Gas & Hepmass & Miniboone \\
    \hline
    NSF & $\mathbf{0.69\pm0.00}$ & $13.01\pm0.02$ & $-14.30\pm0.05$ & $-10.68\pm0.06$ \\
    CIF-NSF & $\mathbf{0.68\pm0.01}$ & $13.08\pm0.00$ & $\mathbf{-13.83\pm0.10}$ & $-9.93\pm0.06$ \\
    RBD-NSF (ours) & $\mathbf{0.69\pm0.01}$ & $\mathbf{13.29\pm0.05}$ & $\mathbf{-14.02\pm0.12}$ & $\mathbf{-9.45\pm0.03}$
  \end{tabular}
\end{table*}

In this section, we aim to demonstrate that our method is indeed capable of modeling complicated distributions. Our code for all experiments is publicly available on GitHub at \url{https://github.com/VincentStimper/resampled-base-flows}.

We start with simple 2D distributions having supports with various topological structures, i.e.\ a distribution with two modes, one with eight modes, and one with two rings, see \autoref{fig:2d_rnvp_fkld} and \autoref{tab:equ_toy_examples}. We use both learning algorithms discussed in Section \ref{sec:learning_algs}. To train our flows via ML, we draw samples from our distributions via rejection sampling. As flow architectures, we choose real NVP \citep{Dinh2017} and residual flow \citep{Behrmann2019,Chen2019a} with 16 layers each. For each flow architecture, we train models with a Gaussian, a mixture of 10 Gaussians, and a resampled base distribution, having a Gaussian proposal and a neural network with 2 hidden layers with 256 hidden units each as well as a sigmoid output function as acceptance probability. 

The densities of the trained real NVP and residual flow models are show in the Figures \ref{fig:2d_rnvp_fkld} and \ref{fig:2d_residual_models}, respectively. With a Gaussian base distribution, the flows struggle to model the complex topological structure. For the trained real NVP models it is especially visible in \autoref{fig:2d_rnvp_fkld} that the density essentially consists of one connected component since there are density filaments between the modes and rings are not closed. The multimodal distributions can be fitted much better when using a mixture of Gaussians as base distribution, but especially the ring distribution can still not be represented properly. With a resampled base distribution the flow models the target distributions accurately without any artefacts. The base distributions assume the respective topological structure of the target while the flow transformation does the fine adjustment of the density. We also estimate the KL divergences of the target and the model distributions which are listed in \autoref{tab:2d_fkld_kld}. In all cases the flow model with the resampled base distribution outperforms the respective baselines.

Moreover, we train real NVP models with Gaussian and resampled base distributions with the KL divergence using the gradient estimators derived in \autoref{thm:rkld_grad}. The same architecture as the models trained with ML learning are used and their resulting densities are shown in \autoref{fig:2d_rnvp_rkld}. In addition, we also computed the KL divergences listed in \autoref{tab:2d_rkld_kld}. As for the previous experiments, the flow with the resampled base distribution clearly outperforms its baseline visually and quantitatively for all the three targets.

\begin{table}[h]
  \caption{KL divergences of the target distribution and the models which were trained using the KL divergence, shown in 
  \autoref{fig:2d_rnvp_rkld}. For each target distribution, the real NVP model with the lower KL divergences is marked in \textbf{bold}.}
  \label{tab:2d_rkld_kld}
  \centering
  \vspace{0.3cm}
  \begin{tabular}{l|ll}
    Base distribution & Gaussian & Resampled \\
    \hline
    Dual moon & 1.844 & \textbf{1.839} \\
    Circle of Gaussians & 0.167 & \textbf{0.122} \\
    Two rings & 11.5 & \textbf{10.3} 
  \end{tabular}
\end{table}

\subsection{Tabular data}

Next, we estimate the density of four tabular datasets from the UCI Machine Learning Repository \citep{Dheeru2021}. We use the same preprocessing and training, validation, and test splits as \cite{Papamakarios2017}, which have been adopted by others in the field \citep{Durkan2019,Cornish2020}. For each dataset, we train a Neural Spline Flow (NSF) \citep{Durkan2019}, its continuously indexed (CIF) variant \citep{Cornish2020}, and one with a resampled base distribution. The LL of the models are shown in \autoref{tab:uci_ll}. More details about the setup and the architecture as well as results for real NVP flows on the same datasets are given in \autoref{sec:tab_app}.

There is no significant performance difference of the three methods on the power dataset. On Hepmass, the resampled base distributions achieves similar performance to CIF, while both are better than the vanilla NSF. For the Gas and Miniboone dataset, the flow with the resampled base distribution clearly outperforms its baselines. When using real NVP, the difference is even larger on all datasets but Miniboone, as can be seen in \autoref{tab:uci_ll_rnvp}.

\subsection{Image generation}
\label{sec:exp_images}

To model images with our method, we train Glow \citep{Kingma2018} on the CIFAR-10 dataset \citep{Krizhevsky2009}. We use the multiscale architecture introduced in Section \ref{sec:method_multiscale}, where we compare a Gaussian with a respective resampled base distribution. As done by \cite{Kingma2018}, we use 3 levels, but train models with 8, 16, and 32 layers per level with each base distribution, with more details provided in \autoref{sec:app_image_generation}. For each model architecture, we do three seeded training runs and report bits per dimension on the test set in \autoref{tab:images_bpd}.

\begin{table}[h]
  \caption{Bits per dimension on the test set of the Glow models with Gaussian and resampled base distribution trained on CIFAR-10. For each architecture, three seeded training runs were done, the reported bits per dimension values are averages over these runs and the standard error is given as an uncertainty estimate. For each number of layers, the lowest values within the confidence interval is marked in \textbf{bold}.}
  \label{tab:images_bpd}
  \centering
  \vspace{0.3cm}
  \begin{tabular}{l|ll}
    Base distribution & Gaussian & Resampled \\
    \hline
    8 layers per level & $3.403\pm0.002$ & $\mathbf{3.399\pm0.001}$ \\
    16 layers per level & $3.339\pm0.001$ & $\mathbf{3.332\pm0.001}$ \\
    32 layers per level & $\mathbf{3.283\pm0.002}$ & $\mathbf{3.282\pm0.001}$ 
  \end{tabular}
\end{table}

\begin{figure*}[h]
    \centering
    \includegraphics[width=\linewidth]{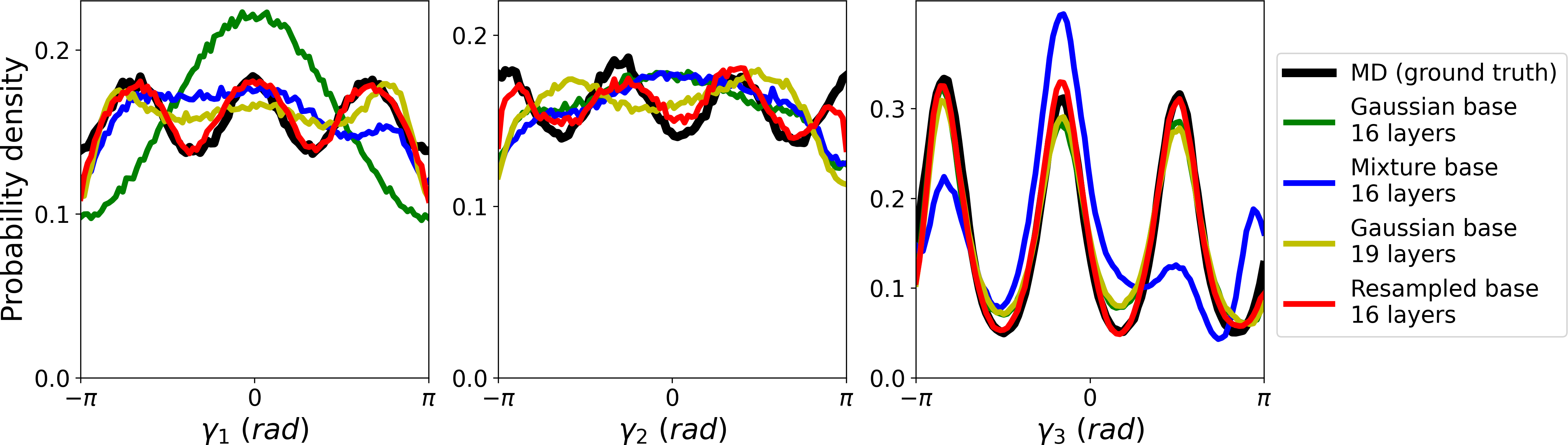}
    \caption{Marginal distribution of three dihedral angles of Alanine dipeptide. The ground truth was determined with a MD simulation. The flow models are based on real NVP and were trained via ML.}
    \label{fig:aldp_rnvp_fkld}
\end{figure*}

\begin{table*}[h]
  \caption{Quantitative comparison of the real NVP models approximating the Boltzmann distribution of Alanine dipeptide trained via ML learning. The LL is evaluated on a test set obtained with a MD simulation. The KL divergences of the 60 marginals were computed and the mean and median of them are reported. All results are averages over 10 runs, the standard error is given, and highers LL as well as lowest KL divergences are marked in \textbf{bold}.}
  \label{tab:aldp_kld_rnvp_fkld}
  \centering
  \vspace{0.3cm}
  \begin{tabular}{l|llll}
    Base distribution & Gaussian & Mixture & Gaussian & Resampled \\
    Number of layers & 16 & 16 & 19 & 16  \\
    \hline
    LL $(\times 10^2)$ & $1.8096\pm0.0002$ & $1.8106\pm0.0002$ & $1.8109\pm0.0001$ & $\mathbf{1.8118\pm0.0001}$ \\
    Mean KLD $(\times 10^{-3})$ & $1.76\pm0.08$ & $8.23\pm0.82$ & $1.35\pm0.03$ & $\mathbf{1.12\pm0.02}$ \\
    Median KLD $(\times 10^{-4})$ & $5.20\pm0.10$ & $43.5\pm6.0$ & $4.63\pm0.08$ & $\mathbf{4.36\pm0.05}$ 
  \end{tabular}
\end{table*}

The flow with the resampled base distribution outperforms the baseline when using 8 or 16 layers per level, while performing about equal with 32 layers. The difference is larger for smaller models, i.e.\ those where fewer layers are used, since models with many layers are already rather expressive. Using a more expressive base distribution also increases the model size and the training time, but this amounts only to 0.4-1.5\% and 5-15\%, respectively, versus a roughly linear increase with the number of layers. Hence, this can be a desirable trade-off, depending on the use case.

\subsection{Boltzmann generators}
\label{sec:boltz_gen}

An important application of normalizing flows is the approximation of Boltzmann distributions. Given the atom coordinates $x$ of a molecule, the likelihood of finding it in this state, i.e.\ the Boltzmann distribution, is proportional to $\mathrm{e}^{-u(x)}$, where $u$ denotes the energy of the system, which can be obtained through physical modeling. Usually, samples are drawn from this distribution through molecular dynamics (MD) simulations \citep{Leimkuhler2015}. However, the sampling process can be greatly accelerated by approximating the Boltzmann distribution with a normalizing flow, called a Boltzmann generator, and then sampling from the flow model \citep{Noe2019}.

Here, we approximate the Boltzmann distribution of the 22 atom Alanine dipeptide, which has been used as a benchmark system in the machine learning literature \citep{Wu2020,Campbell2021,Kohler2021}. We use the coordinate transformation introduced by \cite{Noe2019}, see also Appendix \ref{sec:coord_transform}, which incorporates the translational and rotational symmetry and reduces the number of dimensions from 66 to 60. Both ML learning and training using the KL divergence are used. For the former we generate a training dataset through a MD simulation over $10^7$ steps each and keep every $10^\text{th}$ sample, resulting in datasets with $10^6$ samples. With the same procedure we generate a test set to evaluate all trained models.

With ML learning we train real NVP models having 16 layers with a Gaussian, a mixture of 10 Gaussians, and a resampled base distribution. Furthermore, we train another real NVP model with a Gaussian base, but having 19 layers, which has roughly the same number of parameters as the real NVP model with the resampled base. More details of the architecture and the training procedure are listed in Appendix \ref{sec:aldp_setup}. The marginal distribution of three dihedral angles are shown in \autoref{fig:aldp_rnvp_fkld}.

Although we tried various methods of initializing the mixture of Gaussians, training it jointly with the flow turns out to be unstable leading to a poor fit of the marginals, which is especially visible for $\gamma_3$. Moreover, for two of the three angles, the 16-layered model with the Gaussian base distribution cannot represent the multimodal nature of the distribution accurately. Increasing the number of layers to 19 improves the result, but even this model is clearly outperformed by the real NVP with a resampled base distribution. To compare the performance quantitatively, we computed the LL on the test set and estimated the KL divergence between the MD samples and the models of the marginals through histograms for all 60 dimensions and report the mean and median. All performance measures where averaged over 10 seeded runs and are shown in \autoref{tab:aldp_kld_rnvp_fkld}. The real NVP model with the resampled base distribution outperforms all the baselines. The improved performance comes at the cost of increased training time, i.e.\ 49\% and 26\%, and sampling time, i.e.\ by a factor of 4 and 1.8, for the real NVP and the residual flow models with the resampled base distribution, when compared to their Gaussian counterparts. A further analysis of the Ramachandran plots of the models is done in Appendix \ref{sec:aldp_further_results}. There, we also do a comparison to stochastic normalizing flows \citep{Wu2020} and show the results of training residual flows with ML whereby the model with the resampled base distribution outperforms the baselines as well.

Moreover, we used the KL divergence to train real NVP models with Gaussian, mixture of Gaussians, and resampled base distributions as well, having the same architecture as the real NVP models with 16 layers in the experiments above. This is a challenging task since if samples from the model are too far away from the modes of the Boltzmann distribution, their gradients can be very high making training unstable. However, it is important for the application of Boltzmann generators since the necessity of creating a dataset through other expensive sampling procedures diminishes their ability to reduce the overall computational time needed for sampling. Details of the model architectures and the training procedure are given in Appendix \ref{sec:aldp_setup}. Although it involves rejection sampling, training the flow models with the resampled base distribution only took 15\% longer than the baseline models. As can be seen in \autoref{tab:aldp_kld_rnvp_rkld}, the real NVP model with a resampled base outperforms those with a Gaussian and a mixture of Gaussians; however, they are still inferior to flows trained via ML.

\section{DISCUSSION AND RELATED WORK}
\label{sec:discussion}

The main challenge we tackle in this work, i.e.\ that normalizing flows struggle to model distributions with supports having a complicated topological structure due to their invertible nature, has been addressed in several articles. \cite{Cornish2020} introduced a new set of variables for each flow layer, called continuous indices, which they used as additional input to the flow maps. Thereby, they relaxed the bijectivity of the transformation leading to a better model performance. \cite{Huang2020} augmented the dataset by auxiliary dimensions before applying their normalizing flow model. Although the topological constraints are still present in the augmented space, the marginal distribution of interest can be arbitrary complex. \cite{Wu2020,Nielsen2020} suggested adding sampling layers to the model. Hence, the topology of the support can be changed through the sampling process. \cite{Nielsen2020} also introduced surjective layers, which do not suffer from topological constraints and essentially combine VAEs with flow-based models. These approaches sacrifice the invertibility of the flow map, which has several disadvantages. First of all, the model is no longer a perfect autoencoder, i.e.\ the original datapoint cannot be fully recovered from its latent representation. Second, if the layers of the flow-based model are bijective, significant memory savings are possible \citep{Gomez2017}. Usually, when training layered models such as neural networks the activations of each layer need to be stored in the forward pass because they are needed for gradient computation in the backward pass. However, if the layers are invertible, the activations of the forward pass can be recomputed in the backward pass by applying the inverse of the layer to the activations of the previous layers. Thereby, models can be made basically infinitely deep  with a fixed memory budget. Thirdly, exact evaluation of the likelihood is no longer possible. To train the models, a bound needs to be derived which is optimized instead of the actual likelihood. Our model does not make this sacrifice since only the base distribution is altered, but the transformation of the normalizing flow model is still invertible. On the other side, the base distribution itself cannot be evaluated exactly because its normalization constant is unknown. It can be estimated via Monte Carlo sampling, but its logarithm, appearing in the LL of the model, is biased. However, as discussed in Section \ref{sec:method_basic_idea} for large truncation parameter $T$ we are only off by a constant so e.g. importance sampling could be done without a bias. Moreover, drawing samples from our model is less efficient as many samples from the proposal might get rejected before finally one is accepted and propagated through the flow.

An autoregressive base distribution was introduced by \cite{Bhattacharyya2020}. While they only considered image generation, their entire model, i.e.\ including the base distribution, is tractable in contrast to ours. However, the computational cost of their models scales with the square root of the number of pixels, while ours is constant. \cite{Izmailov2020,Ardizzone2020,Hagemann2021} explored normalizing flows with a multimodal base distribution, in their case a mixture of Gaussians. However, their intention was to model data with multiple classes, thereby performing classification and solving inverse problems. Our model allows to describe data with multiple classes as well through a conditional distribution, similar to the work of \cite{Dinh2017,Kingma2018}, but is also able to describe the complicated topological structure of the distribution of each class.

\cite{Bauer2019} used LARS successfully to create more expressive priors for VAEs, thereby boosting their performance. They demonstrated that the resampled prior can be learned jointly with the encoder and decoder by maximizing the evidence lower bound. In contrast, we showed that a resampled base distribution can be jointly trained with a normalizing flow transformation using both the LL and the KL divergence as an objective. For the latter we derived an expression of the gradient with reduced variance inspired by the work of \cite{Grover2018b}. Furthermore, \cite{Bauer2019} reported that they tried to fully factorize their resampled prior, which would allow them to scale to higher dimensional problems, but they were not able not beat the baseline of a VAE with a factorized Gaussian prior. We were successful by not fully factorizing our resampled base distribution, but defining factors for groups of variables. Moreover, combining LARS with the multiscale architecture of \cite{Dinh2017} and using a factorization similar to \citep{Ma2019} allowed us to scale up our base distribution even further. The largest base distribution in our work, used in Glow to model the CIFAR10 dataset, has 3072 dimensions, while the largest prior of \cite{Bauer2019} only had 100.

\section{CONCLUSION}

In this work, we introduced a base distribution for normalizing flows based on learned rejection sampling. We derived how it can be trained jointly with the flow layers maximizing the expected LL or minimizing the KL divergence. This base distribution can assimilate the complex topological structure of a target and, thereby, overcome a structural weakness of normalizing flows. By applying our procedure to 2D distributions, tabular data, images, and Boltzmann distributions we demonstrated that resampling the base distribution can improve their performance qualitatively and quantitatively.

\subsubsection*{Acknowledgements}
We thank Matthias Bauer, Richard Turner, Andrew Campbell, Austin Tripp, and David Liu for the helpful discussions.
Jos\'e Miguel Hern\'andez-Lobato acknowledges support from a Turing AI Fellowship under grant EP/V023756/1.
This work was supported by the German Federal Ministry of Education and Research (BMBF): Tübingen AI Center, FKZ: 01IS18039B; and by the Machine Learning Cluster of Excellence, EXC number 2064/1 - Project number 390727645. 

\bibliography{references}

\clearpage
\appendix

\thispagestyle{empty}

% For one-column format, uncomment the following:
\onecolumn \makesupplementtitle
% For two-column format, uncomment the following:
%\twocolumn[ \makesupplementtitle ]

\section{LEARNING ALGORITHMS}
\label{sec:a_learn_alg}

\subsection{Estimating the normalization constant}
\label{sec:ml_learn}

To stabilize training, we use the exponential moving average to estimate the value of the normalization constant \citep{Bauer2019}. In practice, this means that if $Z_i$ is the current Monte Carlo estimate of the normalization constant, the exponential moving average $\langle Z \rangle_i$ is computed by
\begin{align}
	\langle Z \rangle_1 &= Z_1, \\
	\langle Z \rangle_i &= (1 - \epsilon)\langle Z \rangle_{i-1} + \epsilon Z_i \,\,\text{    for } i>1,
\end{align}
where $\epsilon$ is the decay parameter which we set to $0.05$ throughout this article. However, the gradients are estimated only with the current Monte Carlo estimate $Z_i$, because otherwise backpropagation through the entire history of $\langle Z \rangle_i$ would be necessary, which would be computationally expensive and memory demanding.

\subsection{Gradient estimators of the Kullback-Leibler divergence}
\label{sec:rkld_proof}

We repeat \autoref{thm:rkld_grad} as stated in the main text and supplement its proof.
\setcounter{theorem}{0}
\begin{theorem}
	\label{thm:rkld_grad_}
	Let $p_\phi(z)$ be the base distribution of a normalizing flow, having parameters $\phi$, and $F_\theta$ be the respective invertible mapping, depending on its parameters $\theta$, such that the density of the model is
	\begin{equation}
		\log\left( p(x) \right) = \log\left( p_\phi(z) \right) - \log\left| \det J_{F_\theta}(z)\right| ,
		\label{equ:flow_with_param_}
	\end{equation}
	with $x = F_\theta(z)$. Then, the gradients of the KL divergence with respect to the parameters are given by
	\begin{align}
		\nabla_\phi \KLD(\theta, \phi) &= \Cov_{p_\phi(z)}\big\{ \log\left( p_\phi(z) \right) - \log\left| \det J_{F_\theta}(z)\right|  - \log \hat{p}^*(F_\theta(z)), \nabla_\phi \log p_\phi(z)\big\} \label{equ:rkld_grad1_}\\
		\nabla_\theta \KLD(\theta, \phi) &= -\E_{p_\phi(z)}\left[ \nabla_\theta \big( \log\left| \det J_{F_\theta}(z)\right|  + \log \hat{p}^*(F_\theta(z))\big) \right] \label{equ:rkld_grad2_}
	\end{align}
\end{theorem}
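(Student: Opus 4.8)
The plan is to push the KL divergence through the change of variables so that it becomes an ordinary expectation over the base distribution, and then differentiate term by term. First I would substitute $x = F_\theta(z)$ with $z \sim p_\phi$ and use \eqref{equ:flow_with_param_} to rewrite \eqref{equ:def_rkld} as
\begin{equation*}
	\KLD(\theta,\phi) = \E_{p_\phi(z)}\!\left[ \log p_\phi(z) - \log\left|\det J_{F_\theta}(z)\right| - \log\hat{p}^*(F_\theta(z)) \right] + \const,
\end{equation*}
where the constant is independent of $\theta$ and $\phi$ (it is the log-normalizer of $p^*$), which is why only the unnormalized $\hat{p}^*$ appears. Abbreviating the integrand by $g(z) := \log p_\phi(z) - \log|\det J_{F_\theta}(z)| - \log\hat{p}^*(F_\theta(z))$, everything reduces to differentiating $\int p_\phi(z)\,g(z)\,\D z$.

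The $\theta$-gradient is the short case: since $p_\phi$ carries no $\theta$-dependence, differentiating under the integral sign gives $\nabla_\theta\KLD = \E_{p_\phi(z)}[\nabla_\theta g(z)]$, and only the last two summands of $g$ depend on $\theta$, which yields \eqref{equ:rkld_grad2_}. This is just the reparameterization trick, and it works precisely because $z$ does not depend on $\theta$.

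The $\phi$-gradient is where the work is. The reparameterization trick is not available because drawing $z$ from $p_\phi$ involves the accept/reject step, so I would instead use the score-function identity. Applying the product rule to $\nabla_\phi\int p_\phi(z)\,g(z)\,\D z$ produces two terms. In the term where $\nabla_\phi$ hits $g$, only $\nabla_\phi\log p_\phi(z)$ survives (the other summands of $g$ are $\phi$-free), and $\int p_\phi(z)\,\nabla_\phi\log p_\phi(z)\,\D z = \nabla_\phi\!\int p_\phi(z)\,\D z = 0$. In the term where $\nabla_\phi$ hits $p_\phi$, I use $\nabla_\phi p_\phi(z) = p_\phi(z)\,\nabla_\phi\log p_\phi(z)$ to get $\E_{p_\phi(z)}[g(z)\,\nabla_\phi\log p_\phi(z)]$. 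Finally, to obtain the covariance form of \eqref{equ:rkld_grad1_} — and the accompanying variance reduction in the spirit of \cite{Grover2018b} — I subtract the constant baseline $\E_{p_\phi(z)}[g(z)]$; this changes nothing because $\E_{p_\phi(z)}[\nabla_\phi\log p_\phi(z)] = 0$, and the resulting expression is exactly $\Cov_{p_\phi(z)}\{g(z),\,\nabla_\phi\log p_\phi(z)\}$.

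I do not anticipate a real obstacle. The two points that deserve care are conceptual rather than computational: recognizing that the rejection step rules out reparameterization for $\phi$ and forces the log-derivative estimator, and the analytic regularity that lets one interchange $\nabla_\phi$ with the $z$-integral (so that $\int\nabla_\phi p_\phi = \nabla_\phi\int p_\phi = 0$), which I would simply assume as is standard here. Once those are in place, the covariance rewriting is a one-line baseline argument.
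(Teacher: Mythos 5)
Your proposal is correct and follows essentially the same route as the paper: plug the flow density into the KL divergence to get an expectation over $p_\phi(z)$, differentiate under the integral for $\theta$, use the score-function (log-derivative) identity for $\phi$, and convert to the covariance form via $\E_{p_\phi(z)}[\nabla_\phi \log p_\phi(z)] = 0$. The only cosmetic difference is that the paper splits the $\phi$-gradient computation into the entropy term and the remaining term before recombining, whereas you apply the product rule to the full integrand at once; the substance is identical.
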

\begin{proof}
    The KL divergence is defined as
    \begin{equation}
        \KLD(\theta, \phi) := \mathds{E}_{p(x)}\left[ \log p(x) \right] - \mathds{E}_{p(x)}\left[ \log p^*(x) \right] .
        \label{equ:def_rkld_}
    \end{equation}
	By plugging in \eqref{equ:flow_with_param_} into \eqref{equ:def_rkld_} we obtain
	\begin{equation}
		\KLD(\theta, \phi) = \E_{p_\phi(z)}\left[ \log p_\phi(z) - \log\left| \det J_{F_\theta}(z)\right|  - \log \hat{p}^*(F_\theta(z)) \right] .
		\label{equ:rkld_proof}
	\end{equation}
	Computing the gradient of \eqref{equ:rkld_proof} with respect to $\theta$ is straight forward.
	\begin{equation}
	\begin{split}
		\nabla_\theta \KLD(\theta, \phi) & = \nabla_\theta\E_{p_\phi(z)}\left[ \log p_\phi(z) - \log\left| \det J_{F_\theta}(z)\right|  - \log \hat{p}^*(F_\theta(z)) \right] \\ 
		& = \E_{p_\phi(z)}\left[ \nabla_\theta\big( \log p_\phi(z) - \log\left| \det J_{F_\theta}(z)\right|  - \log \hat{p}^*(F_\theta(z)) \big)\right] \\
		& = - \E_{p_\phi(z)}\left[ \nabla_\theta\big(\log\left| \det J_{F_\theta}(z)\right|  + \log \hat{p}^*(F_\theta(z)) \big)\right]
	\end{split}
	\end{equation}
	To get the gradient with respect to $\phi$, we decompose \eqref{equ:rkld_proof} into two parts and consider their gradients separately.
	\newpage
	\begin{equation}
	\begin{split}
		\nabla_\phi \E_{p_\phi(z)}\left[ \log p_\phi(z) \right] & = \nabla_\phi \int  p_\phi(z)\log p_\phi(z) \D z \\
		& = \int \nabla_\phi \big( p_\phi(z)\log p_\phi(z) \big) \D z \\ 
		& = \int \nabla_\phi p_\phi(z) + \log p_\phi(z) \nabla_\phi p_\phi(z) \D z \\ 
		& = \nabla_\phi \underbrace{\int p_\phi(z) \D z}_{=1} + \int p_\phi(z)\log p_\phi(z) \nabla_\phi \log p_\phi(z) \D z \\ 
		& = \E_{p_\phi(z)}\left[ \log p_\phi(z) \nabla_\phi \log p_\phi(z) \right] \\ 
	\end{split}
	\end{equation}
	\begin{equation}
	    \begin{split}
	    \mathrm{ld} :&= \log\left| \det J_{F_\theta}(z)\right| + \log \hat{p}^*(F_\theta(z)) \\
		\nabla_\phi \E_{p_\phi(z)}\left[ \mathrm{ld}\, \right] & = \nabla_\phi \int \mathrm{ld}\, p_\phi(z) \D z \\
		& = \int \mathrm{ld}\, \nabla_\phi p_\phi(z) \D z \\
		& = \int \mathrm{ld}\, p_\phi(z) \nabla_\phi \log p_\phi(z) \D z \\
		& = \E_{p_\phi(z)}\left[ \mathrm{ld}\, \nabla_\phi \log p_\phi(z) \right] 
	    \end{split}
	\end{equation}
	Using these two expressions, we obtain
	\begin{align}
		\nabla_\phi \KLD(\theta, \phi) & = \E_{p_\phi(z)}\left[ \big( \log p_\phi(z) - \log\left| \det J_{F_\theta}(z)\right| - \log \hat{p}^*(F_\theta(z)) \big) \nabla_\phi \log p_\phi(z) \right] \label{equ:rkld_grad_phi_l1} \\
		& = \Cov_{p_\phi(z)}\big\{ p_\phi(z) - \log\left| \det J_{F_\theta}(z)\right|  - \log \hat{p}^*(F_\theta(z)), \nabla_\phi \log p_\phi(z)\big\}. \label{equ:rkld_grad_phi_l2}
	\end{align}
	When concluding \eqref{equ:rkld_grad_phi_l2} from \eqref{equ:rkld_grad_phi_l1} we used the well known identity
	\begin{equation}
	    \begin{split}
	        \E_{p_\phi(z)}\left[ \nabla_\phi \log p_\phi(z)\right] &= \int p_\phi(z)\nabla_\phi \log p_\phi(z) \D z \\ 
	        &= \int \frac{p_\phi(z)}{p_\phi(z)}\nabla_\phi p_\phi(z) \D z = \nabla_\phi \underbrace{\int p_\phi(z) \D z}_{=1} = 0.
	    \end{split}
	\end{equation}
\end{proof}

\section{MULTISCALE ARCHITECTURE}

As already mentioned in the main paper, \cite{Dinh2017} introduced the multiscale architecture for normalizing flows to deal with high dimensional data such as images. As sketched in \autoref{fig:multscale_arch}, initially, the entire input $x$ is transformed by several flow layers. The result is split up into two parts, $h_1^{(1)}$ and $h_1^{(2)}$. \cite{Dinh2017} did this by first squeezing the image, i.e. reducing the height and width of the image by a factor 2 and adding the surplus pixels as additional channels, and then splitting the resulting tensor along the channel dimension. $h_1^{(1)}$ is immediately factored out in the density, while $h_2^{(2)}$ is further transformed by $F_2$. The process is the repeated until a desired depth is reached. The output of the last map, in \autoref{fig:multscale_arch} it is $F_4$, is not split, but directly passed to its base distribution. The full density for a multiscale architecture with $n$ levels is given by
\begin{equation*}
	p(x) = \prod_{i=1}^{n} \left| \det\left( J_{F_i}(h_{i-1})\right) \right| p(z_i),
\end{equation*}
where we set $h_0 = x$.

\begin{figure}[h]
	\centering
	\begin{tikzpicture}
		% Random variable boxes
		\fill[blue!30, rounded corners] (0, 0) rectangle (1, 8) node[pos=.5, black] {$x$};
		\fill[red!30, rounded corners] (2.5, 0) rectangle (3.5, 8);
		\draw[dotted, thick] (2.5, 4) -- (3.5, 4);
		\fill[red!30, rounded corners] (5, 0) rectangle (6, 4);
		\draw[dotted, thick] (5, 2) -- (6, 2);
		\fill[red!30, rounded corners] (7.5, 0) rectangle (8.5, 2);
		\draw[dotted, thick] (7.5, 1) -- (8.5, 1);
		\fill[yellow!30, rounded corners] (10, 0) rectangle (11, 0.95);
		\fill[yellow!30, rounded corners] (10, 1.05) rectangle (11, 2);
		\fill[yellow!30, rounded corners] (10, 2.1) rectangle (11, 4);
		\fill[yellow!30, rounded corners] (10, 4.1) rectangle (11, 8);
		% Labels
		\node at (3, 2) {$h_1^{(2)}$};
		\node at (3, 6) {$h_1^{(1)}$};
		\node at (5.5, 1) {$h_2^{(2)}$};
		\node at (5.5, 3) {$h_2^{(1)}$};
		\node at (8, 0.5) {$h_3^{(2)}$};
		\node at (8, 1.5) {$h_3^{(1)}$};
		\node at (10.5, 6) {$z_1$};
		\node at (10.5, 3) {$z_2$};
		\node at (10.5, 1.5) {$z_3$};
		\node at (10.5, 0.5) {$z_4$};
		% Map and identity arrows
		\draw[->, thick] (1.1, 4) -- (2.4, 4) node [midway, above] {$F_1$};
		\draw[->, thick] (3.6, 2) -- (4.9, 2) node [midway, above] {$F_2$};
		\draw[->, thick] (6.1, 1) -- (7.4, 1) node [midway, above] {$F_3$};
		\draw[->, thick] (8.6, 0.5) -- (9.9, 0.5) node [midway, above] {$F_4$};
		\draw[->, thick, dashed] (3.6, 6) -- (9.9, 6);
		\draw[->, thick, dashed] (6.1, 3) -- (9.9, 3);
		\draw[->, thick, dashed] (8.6, 1.5) -- (9.9, 1.5);
	\end{tikzpicture}
	\caption{Multiscale architecture with four levels as introduced in \citep{Dinh2017}. First, the entire input $x$ is transformed by $F_1$. The result is then split up into two parts of which one of them is factored out immediately and the other one is further processed by $F_2$. This process is repeated a few times until the desired depth is reached. The input is drawn in blue, intermediate results are red, and the components of the final variable $z$ are yellow.}
	\label{fig:multscale_arch}
\end{figure}
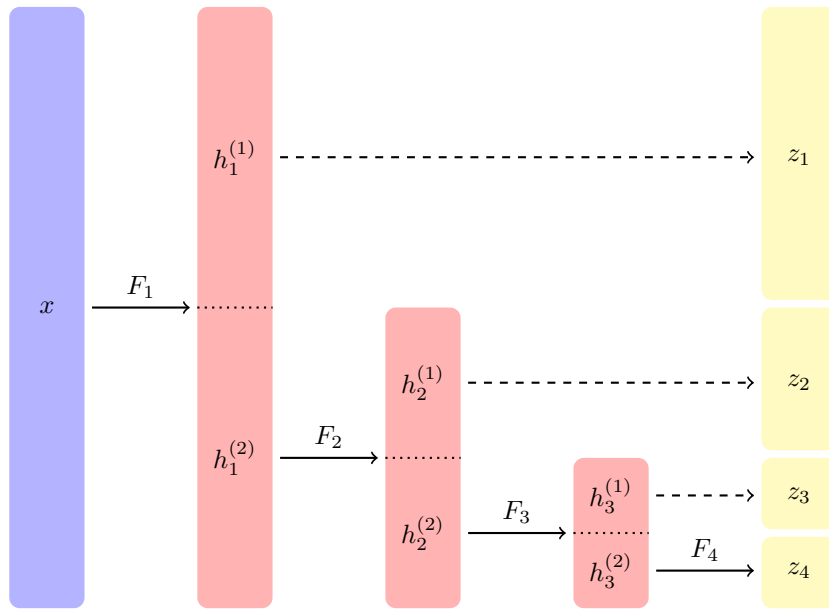

\section{LEARNED ACCEPTANCE PROBABILITY}

\subsection{Choosing the architecture}
\label{sec:a_architecture}

In order to get an impression of what the architecture of the neural network defining the acceptence probability $a$ for LARS, we did an ablation experiment on the Power UCI dataset. We left the flow architecture of a real NVP model constant but changed the number of hidden layers and units of the neural network representing $a$. The baseline model with a Gaussian base distribution achieved $0.330\pm0.003$ on the test set. When changing the number of hidden layers we used 512 hidden units and 3 hidden layers when changing the number hidden units.

\begin{table}[h]
    \centering
    \caption{LL of the test set for different number of hidden layers for $a$ while leaving the number of hidden units constant at 512.}
    \label{tab:abl_hl}
    \begin{tabular}{l|lllll}
         Hidden layers & 1 & 3 & 5 & 7 & 9 \\
         \hline
         LL & 0.37 & 0.53 & 0.58 & 0.62 & 0.63 
    \end{tabular}
\end{table}

\begin{table}[h]
    \centering
    \caption{LL of the test set for different number of hidden layers for $a$ while leaving the number of hidden layers constant at 3.}
    \label{tab:abl_hl}
    \begin{tabular}{l|lllll}
         Hidden units & 32 & 128 & 512 & 2048 & 8192 \\
         \hline
         LL & 0.39 & 0.45 & 0.53 & 0.61 & 0.61
    \end{tabular}
\end{table}

We see that both the number of hidden layers and units is important. The LL increases as we are adding more with diminishing returns. However, note that especially inceasing the number of hidden units increases the parameter count as well as the computational cost; hence, an application specific trade-off needs to be found.

\subsection{Tuning the rejection rate}
\label{sec:a_rejection_rate}

\begin{figure}[h]
    \centering
    \includegraphics[width=0.7\linewidth]{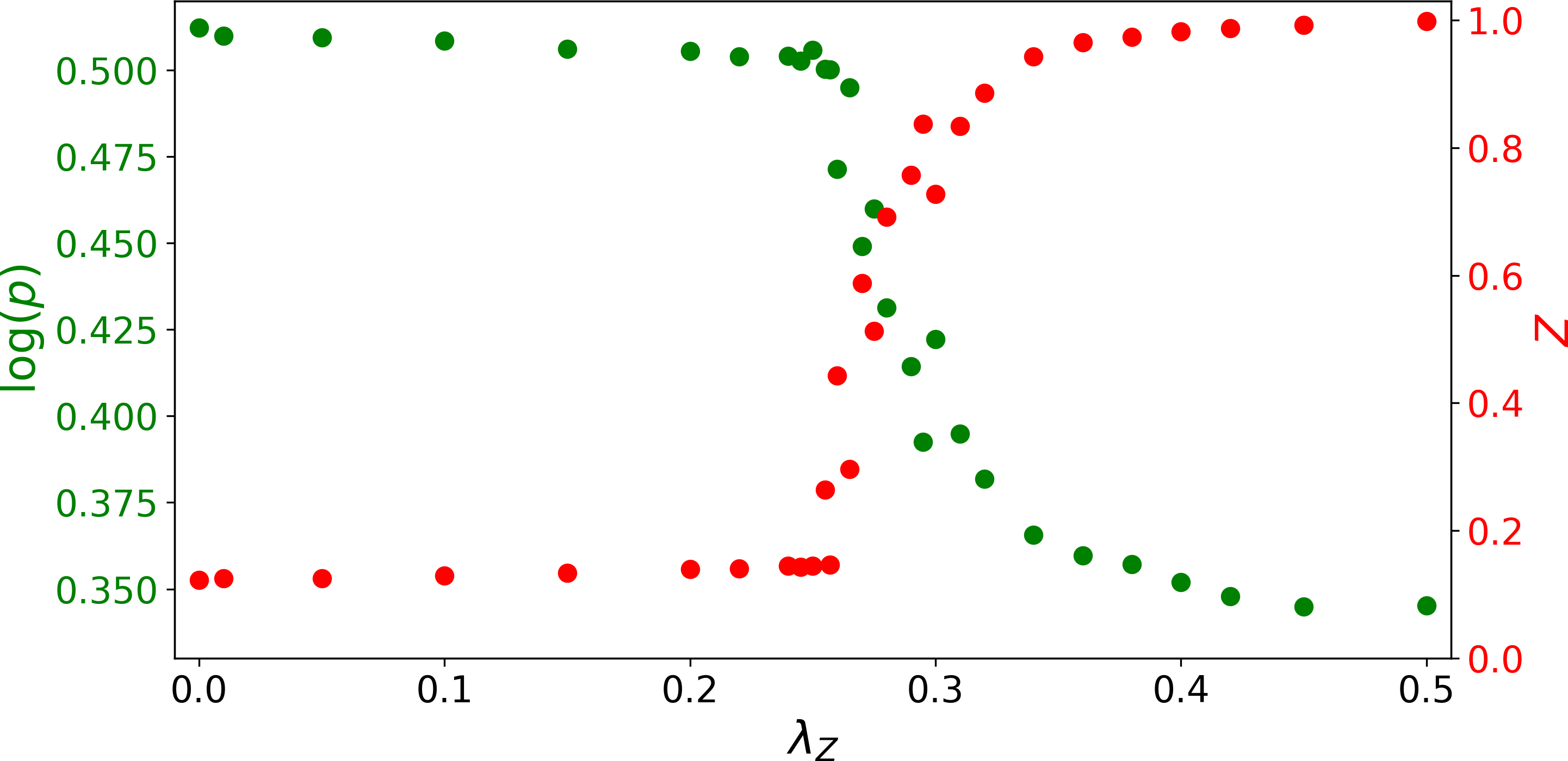}
    \caption{LL on the test set and $Z$ with respect to the hyperparameter $\lambda_Z$ introduced in \eqref{equ:loss_logp_z}.}
    \label{fig:power_logp_z}
\end{figure}

As discussed in the main text, the rejection rate of LARS can be controlled through the truncation parameter $T$. It sets a limit on how often subsequent proposals can be rejection in order to generate one sample. However, it does not tell us something about the actual rejection rate determining the sampling speed, which might be lower. The number of expected samples per sample from the proposal $\pi$ is given by
\begin{equation}
    \mathds{E}_\pi(z)[a(z)] = \int a(z)\pi(z)\D z = Z,
\end{equation}
which is equivalent to the normalization constant $Z$. Hence, if we increase $Z$ we can decrease the rejection rate. We can simply do so by including it in our optimization, e.g. when doing ML we can instead minimize the loss
\begin{equation}
    \mathcal{L} = -\mathds{E}_{p^*(x)}[\log p(x)] - \lambda_Z Z,
    \label{equ:loss_logp_z}
\end{equation}
where $\lambda_Z \in \mathds{R}_+$ is a positive hyperparameter.

In order to test this procedure, we trained 30 real NVP models with a resampled base distribution with different values of $\lambda_Z$ on the UCI Power dataset. The neural network representing the acceptance probability $a$ had 3 hidden layers with 512 hidden units and we set $T=20$. In \autoref{fig:power_logp_z} we show the LL of the models on the test set as well as $Z$ depending on the hyperparameter $\lambda_Z$. We see that by increasing $\lambda_Z$ we can trade off performance in terms of LL with the expected number of LARS samples per sample from the proposal. When $Z$ approaches one, i.e. nearly all samples from the proposal get accepted, the LL drops to the value achieved by the flow with a Gaussian base distribution being $0.330\pm0.003$, see \autoref{tab:uci_ll_rnvp}.

\section{2D DISTRIBUTIONS}

The densities of the distributions used as sample targets in Section \ref{sec:exp_2d} are given in \autoref{tab:equ_toy_examples}.

\begin{table}[h!]
	\caption{Logarithm of the unnormalized densities of the target distributions used in Section \ref{sec:exp_2d}.}
	\label{tab:equ_toy_examples}
	\centering
	\vspace{0.3cm}
	\begin{tabular}{c|c}
		%\hline
		& Unnormalized log density \\
		\hline 
		& \\[-1em]
		Dual Moon & $\displaystyle{-\frac{\left( \norm{z} - 1\right) ^2}{0.08} - \frac{\left( \left| z_1\right| - 2\right) ^2}{0.18} + \log\left( 1 + \e^{-\frac{4z_1}{0.09}}\right) }$ \\[1em]
		%\hline
		& \\[-1em]
		Circle of Gaussians &  $\displaystyle{\log\left[ \sum_{i=1}^8 \left( \frac{9}{2\pi\left( 2 - \sqrt{2}\right) } \e^{-\frac{9\left( \left( z_1 - 2\sin\left( \frac{2\pi}{8i}\right) \right) ^ 2 + \left( z_1 - 2\cos\left( \frac{2\pi}{8i}\right) \right) ^ 2 \right) }{4 - 2\sqrt{2}}}\right)\right]  }$ \\[1,5em]
		%\hline
		& \\[-1em]
		Two Rings & $\displaystyle{\log\left[ \sum_{i=1}^2 \left( \frac{32}{\pi} \e^{-32\left(\norm{z} - i - 1 \right) ^2 }\right)\right]  }$ \\[1em]
		%\hline
	\end{tabular}
\end{table}

\begin{figure}[!h]
    \centering
    \includegraphics[width=0.95\linewidth]{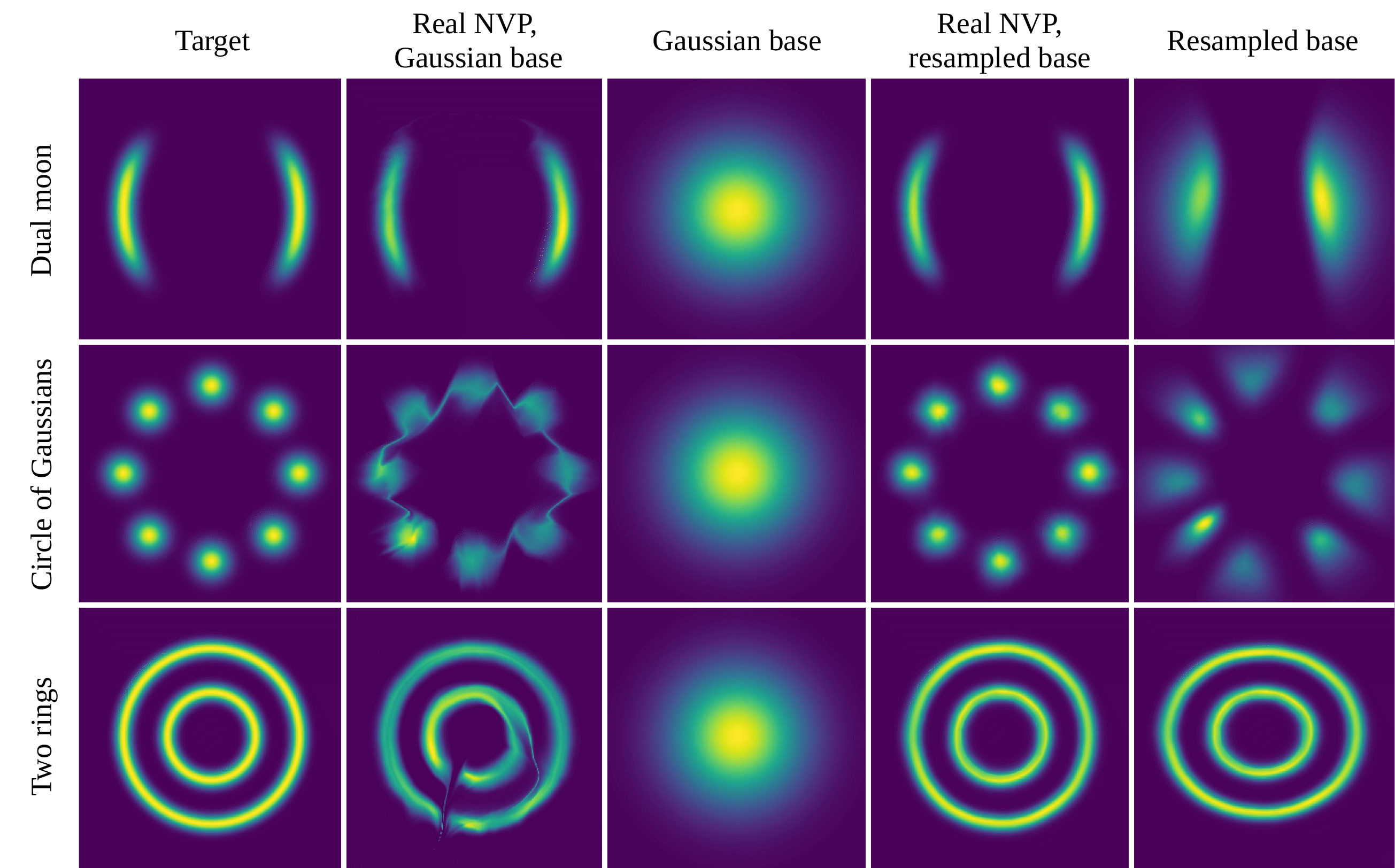}
    \caption{Visualization of the densities when approximating three 2D distributions with complex topological structure. Real NVP models with Gaussian and a resampled base distributions where trained using the KL divergence.}
    \label{fig:2d_rnvp_rkld}
%\end{figure}

%\begin{figure}[!h]
%    \centering
    \includegraphics[width=0.75\linewidth]{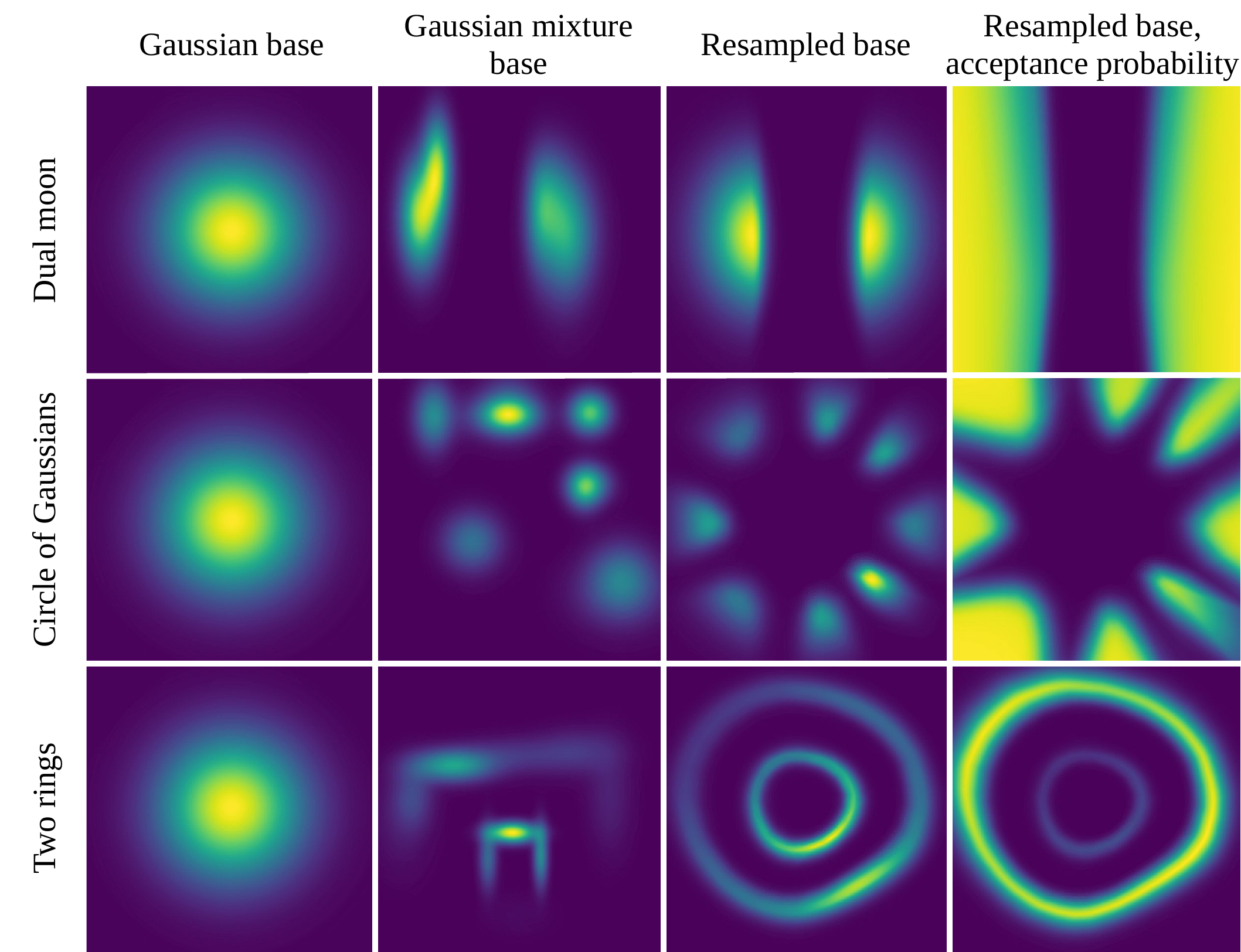}
    \caption{Visualization of the learned base distributions of the real NVP flow models shown in \autoref{fig:2d_rnvp_fkld}.}
    \label{fig:2d_residual_base}
\end{figure}

\begin{figure}[!h]
    \centering
    \includegraphics[width=0.75\linewidth]{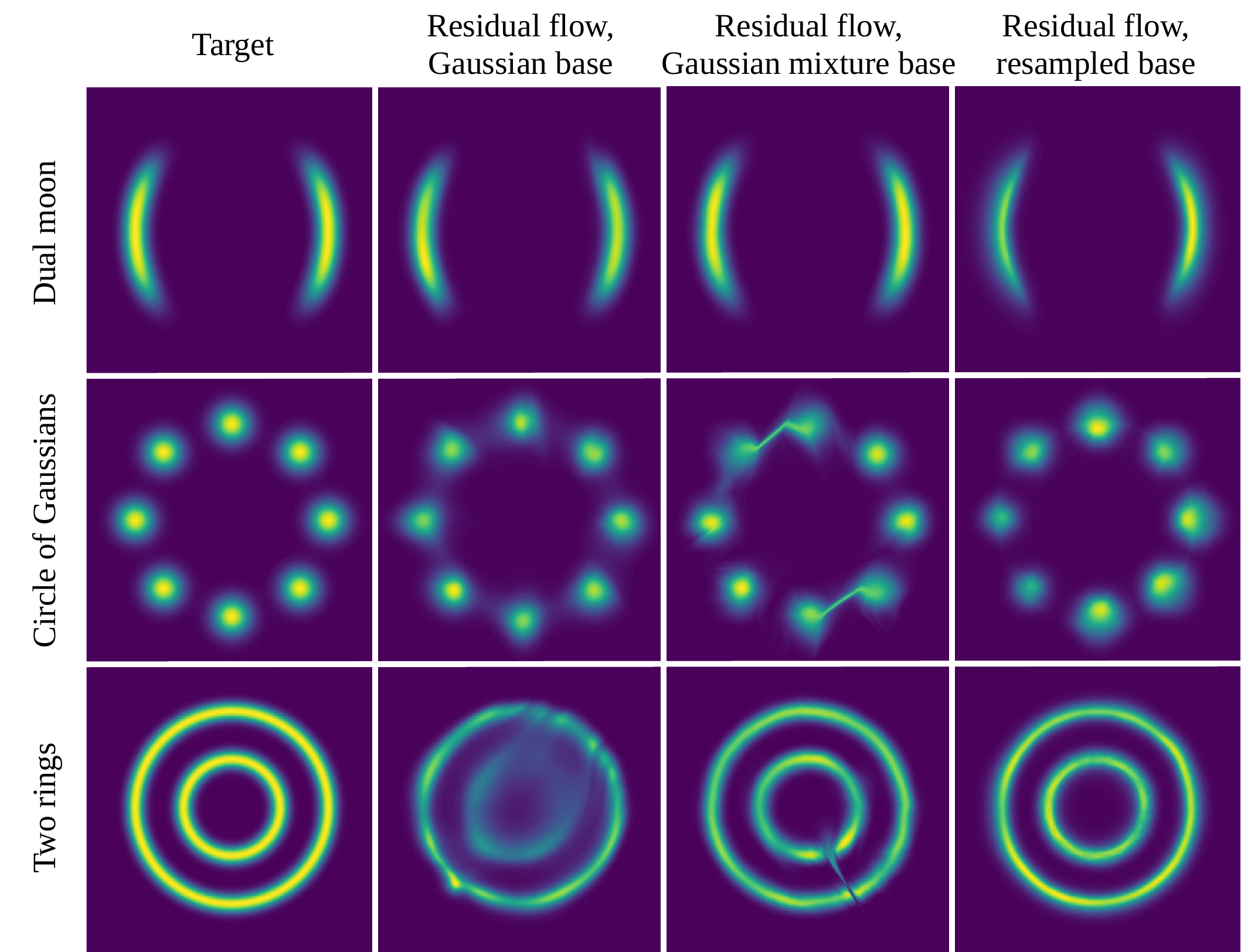}
    \caption{Visualization of the residual flow densities when approximating three 2D distributions with complex topological structure. The models were trained using ML learning and the corresponding base distributions are shown in \autoref{fig:2d_residual_base}.}
    \label{fig:2d_residual_models}
%\end{figure}

%\begin{figure}[!h]
%    \centering
    \includegraphics[width=0.75\linewidth]{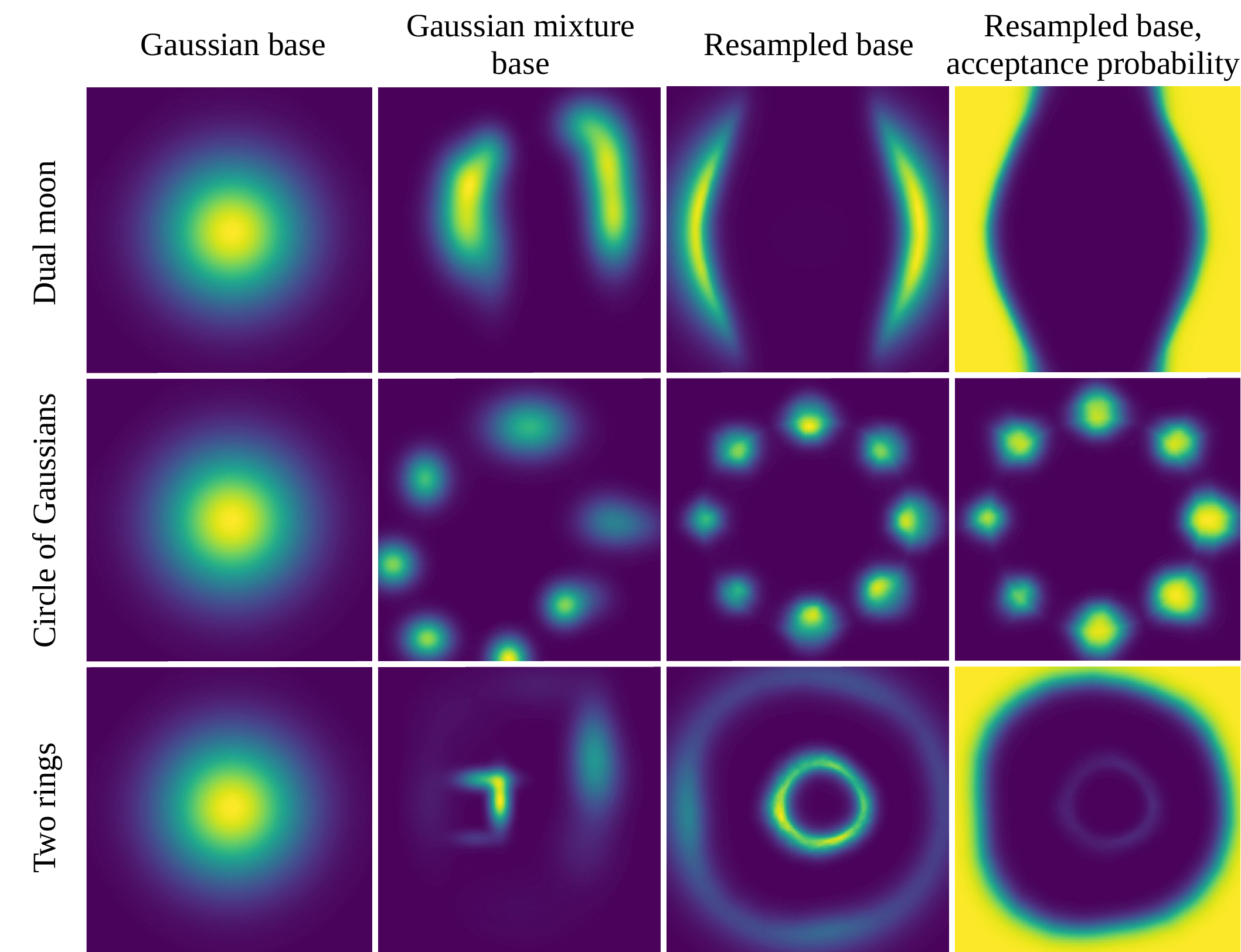}
    \caption{Visualization of the learned base distributions of the residual flow models shown in \autoref{fig:2d_residual_models}.}
    \label{fig:2d_residual_base}
\end{figure}

All models approximating a 2D distribution uses for each layer a fully connected network having 2 hidden layers with 32 hidden units each as parameter map or residual learning block, respectively. The mixture of Gaussian base distributions are initialized by uniformly sampling the mean in the hypercube $[-2.5, 2.5]^D$ and setting the variances to $0.5 \cdot \mathds{1}_D$, where $\mathds{1}_D$ is the $D$-dimensional identity matrix.

The models are trained on a computer with 6 Intel i5-9400F CPUs and a Nvidia GeForce RTX 2070 graphics card. The Adam optimizer with a learning rate of $10^{-3}$ is used. Training is done for $2\cdot10^4$ iterations with a batch size of $1024$.

%\begin{figure}
%    \centering
%    \includegraphics[width=0.6\linewidth]{}
%    \caption{Visualization of the densities and the acceptance function when approximating three 2D distributions with complex topological structure. A resampled Gaussian is trained using the KL divergence.}
%    \label{fig:2d_no_flow_rkld}
%\end{figure}

\section{TABULAR DATA}
\label{sec:tab_app}

In addition to the NSF models, we also trained real NVP models with a Gaussian, a mixture of Gaussians, and a resampled base distribution to the four UCI datasets. The results are shown in \autoref{tab:uci_ll_rnvp}.

\begin{table*}[h]
  \caption{LL of real NVP models with different base distributions on the test sets of the respective datasets. The values are averaged over 3 runs each and the standard error is given as a measure of uncertainty. The highest values within the confidence interval are marked in \textbf{bold}.}
  \label{tab:uci_ll_rnvp}
  \centering
  \vspace{0.3cm}
  \begin{tabular}{l|llll}
    Base distribution & Power & Gas & Hepmass & Miniboone \\
    \hline
    Gaussian & $0.330\pm0.003$ & $10.1\pm0.1$ & $-19.5\pm0.1$ & $-11.65\pm0.05$ \\
    Mixture & $0.341\pm0.001$ & $9.9\pm0.2$ & $-19.5\pm0.1$ & $\mathbf{-11.49\pm0.04}$ \\
    Resampled & $\mathbf{0.560\pm0.006}$ & $\mathbf{12.8\pm0.1}$ & $\mathbf{-18.4\pm0.1}$ & $\mathbf{-11.48\pm0.01}$
  \end{tabular}
\end{table*}

\begin{table*}[h]
  \caption{Details about datasets from the UCI machine learning repository, the architecture of the NSF models as well as the resampled base distribution, and the training procedure.}
  \label{tab:uci_details_nsf}
  \centering
  \vspace{0.3cm}
  \begin{tabular}{l|llll}
     & Power & Gas & Hepmass & Miniboone \\
    \hline
    Dimension & $6$ & $8$ & $21$ & $43$ \\
    Train data points & $1.6\cdot10^6$ & $8.5\cdot10^5$ & $3.2\cdot10^5$ & $3.0\cdot10^4$ \\
    \hline
    Flow layers & $10$ & $10$ & $10$ & $10$ \\
    Hidden layers flow maps & $2$ & $2$ & $2$ & $1$ \\
    Hidden units flow maps & $256$ & $128$ & $256$ & $64$ \\
    \hline
    Hidden layers $a$ & $7$ & $9$ & $4$ & $2$ \\
    Hidden units $a$ & $512$ & $512$ & $512$ & $128$ \\
    Truncation parameter $T$ & $100$ & $50$ & $40$ & $40$ \\
    \hline
    Dropout rate & $0$ & $0.1$ & $0.3$ & $0.3$ \\
    Batch size & $512$ & $512$ & $256$ & $64$ \\
    Learning rate & $3\cdot 10^{-4}$ & $4\cdot 10^{-4}$ & $4\cdot 10^{-4}$ & $3\cdot 10^{-4}$
  \end{tabular}
\end{table*}

\begin{table*}[h]
  \caption{Details about the architecture of the real NVP models used as well as the resampled base distribution, and the training procedure.}
  \label{tab:uci_details_rnvp}
  \centering
  \vspace{0.3cm}
  \begin{tabular}{l|llll}
     & Power & Gas & Hepmass & Miniboone \\
    \hline
    Flow layers & $16$ & $16$ & $16$ & $16$ \\
    Hidden layers flow maps & $2$ & $2$ & $2$ & $2$ \\
    Hidden units flow maps & $128$ & $128$ & $64$ & $32$ \\
    \hline
    Hidden layers $a$ & $3$ & $3$ & $3$ & $3$ \\
    Hidden units $a$ & $512$ & $512$ & $512$ & $256$ \\
    Truncation parameter $T$ & $100$ & $100$ & $100$ & $100$ \\
    \hline
    Dropout rate & $0$ & $0.1$ & $0.2$ & $0.2$ \\
    Batch size & $512$ & $512$ & $256$ & $128$ \\
    Learning rate & $5\cdot 10^{-4}$ & $5\cdot 10^{-4}$ & $3\cdot 10^{-4}$ & $3\cdot 10^{-4}$
  \end{tabular}
\end{table*}

In all experiments regarding the UCI datasets, we use dropout both in the neural networks defining the flow map and the acceptance probability function $a$ of the resampled base distribution during training. Adamax is used as an optimizer \citep{Kingma2015}. The experiments are run on machines with 36 Intel Xeon Platinum 9242 CPUs and 128 GB RAM. Further details on the datasets, the flow architecture, and the training procdure are given in \autoref{tab:uci_details_nsf} and \autoref{tab:uci_details_rnvp}.

\section{IMAGE GENERATION}
\label{sec:app_image_generation}

The parameter maps of the Glow models are convolutional neural networks (CNNs) with 3 layers, the first and the last having a kernel size of $3\times3$ and the middle layer of $1\times1$. The number of channels of the middle layer is 512 and those of the other layers is determined by the respective input and output. This is the same architecture as used in \citep{Kingma2018}.

To ensure that each factor of the base distribution has not more than 100 dimensions, we apply a squeeze operation to the feature map of the first level before passing it to the base distribution. Therefore, each channel has a maximum size of $8\times8 = 64$. A CNN with 4 layers, having 32 channels and a kernel size of $3\times3$ each and a fully connected output layer, is used as acceptance function at each level. The convolutions of this CNN are strided with a stride of 2 until the image size is $4\times4$. The normalization constants are updated with 2048 samples per iteration and before evaluating our models we estimated them with $10^{10}$ samples.

Each model is trained for $10^6$ iterations with the Adam optimizer having a learning rate of $10^{-3}$. The learning rate is warmed up linearly over $10^3$ iterations and the batch size is 512. The models with 8, 16, and 32 layers per level are trained in a distributed fashion on 1, 2, and 4 Nvidia Quadro RTX 5000 graphics cards. We apply Polyak-Ruppert weight averaging \citep{Polyak1990,Ruppert1988} with an update rate of $10^{-3}$, where the exponential moving average of the model weights is computed in order to improve the generalization performance on the test set \citep{Izmailov2018}.

\begin{table}[h]
  \caption{Percentage increase in training time and model size when using a resampled instead of a Gaussian base distribution for the models trained in Section \ref{sec:exp_images}.}
  \label{tab:images_overhead}
  \centering
  \vspace{0.3cm}
  \begin{tabular}{l|ll}
    Layers per level & Training time & Model size\\
    \hline
    8 & $4.7\%$ & $1.5\%$ \\
    16 & $15\%$ & $0.75\%$ \\
    32 & $9.1\%$ & $0.38\%$
  \end{tabular}
\end{table}

\section{BOLTZMANN GENERATORS}

\subsection{Coordinate transformation}
\label{sec:coord_transform}

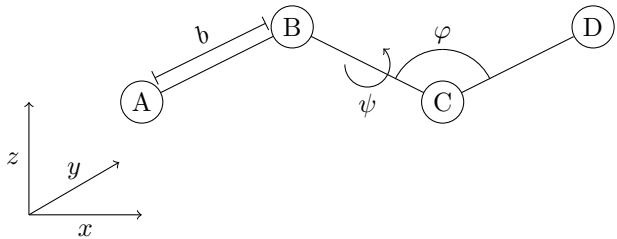
\begin{figure}[h]
	\centering
	\begin{tikzpicture}
		% Atoms
		\node (A) at (0,0) {A};
		\node (B) at (2,1) {B};
		\node (C) at (4,0) {C};
		\node (D) at (6,1) {D};
		\draw (A) circle [radius=0.28];
		\draw (B) circle [radius=0.28];
		\draw (C) circle [radius=0.28];
		\draw (D) circle [radius=0.28];
		% Bonds
		\draw (A) -- (B) -- (C) -- (D);
		% Bond length
		\draw[|-|] (0.17, 0.32) -- node[above,sloped] {$b$} (1.65, 1.05);
		% Dihedral angle
		\draw[->] (2.7, 0.5) arc (-180:45:0.3);
		\node at (3, -0.05) {$\psi$};
		% Bond angle
		\draw (4, 0.7) arc (90:25.9:0.7);
		\draw (4, 0.7) arc (90:154.1:0.7);
		\node at (4, 0.9) {$\varphi$};
		% Cartesien Coordinates
		\draw[->] (-1.5, -1.5) -- node[below] {$x$} (0, -1.5);
		\draw[->] (-1.5, -1.5) -- node[left] {$z$} (-1.5, 0);
		\draw[->] (-1.5, -1.5) -- node[above] {$y$} (-0.3, -0.8);
	\end{tikzpicture}
	\caption{Illustration of molecular coordinates. The state of the molecule can be described through the Cartesian coordinates, i.e.\ $x$, $y$, and $z$, of each of the four atoms A, B, C, and D. Alternatively, internal coordinates, i.e.\ bond lengths, bond angles, and dihedral angles, can be used. Here, the bond length $b$ is the distance between atom A and B, the bond angle $\varphi$ being the angle between the bonds between B and C as well as C and D, and the dihedral angle $\psi$ is the angle between the plans spanned by A, B, and C as well as B, C, and D. We use a combination of Cartesian and internal coordinates.}
	\label{fig:mol_coord}
\end{figure}

To simplify the approximation Boltzmann distributions of complex molecules, a coordinate transformation was introduced \citep{Noe2019}. Some of the Cartesian coordinates are mapped to their respective internal coordinates, i.e.\ bond lengths, bond angles, and dihedral angles, which are illustrated in \autoref{fig:mol_coord}. The internal coordinates are normalized, with mean and standard deviation calculated on the training dataset generated through MD, but a suitable experimental dataset could be used as well. To the remaining Cartesian coordinates principal component analysis is applied. Subsequently, the weights of all but the last six principal components are used as coordinates. Thereby, six degrees of freedom are eliminated, corresponding to the three translational and free rotational coordinates which leave the Boltzmann distribution invariant.

\subsection{Setup of the experiments}
\label{sec:aldp_setup}

All real NVP models trained via ML have a neural network with 2 hidden layers and 64 hidden units as a parameter map at each coupling layer. Between the coupling layers, we apply a invertible linear transformation which is learned with the other parameters of the flow, similar to the invertible 1x1 convolutions introduced in \citep{Kingma2018}. The acceptance function of the resampled base distribution is a fully connected neural network with 2 hidden layers having 256 hidden units each. At each iteration, the normalization constant $Z$ is updated with 512 samples from the Gaussian proposal during training. Before evaluating our models, we estimated $Z$ with $10^{10}$ samples. The residual flow models have 8 layers each with each layer having 2 layer fully connected neural network with 64 hidden units and the resampled base distribution has 3 hidden layers with 512 hidden units. All models are trained for $5\cdot10^5$ iterations with the Adam optimizer \citep{Kingma2015} and a batch size of 512. The learning rate is set to $10^{-3}$ and decreased to $10^{-4}$ after $2.5\cdot10^5$ iterations. We also do Polyak-Ruppert weight averaging \citep{Polyak1990,Ruppert1988} with an update rate of $10^{-2}$. Each model is trained and evaluated on a server with 16 Intel Xeon E5-2698 CPUs and a Nvidia GTX980 GPU.

The real NVP models trained by minimizing the KL divergence have the same architecture as those in the previous experiment. However, to improve the stability of the training process, the models are trained with double precision numbers on 32 Intel Xeon E5-2698 CPUs each. $10^5$ iterations are done with the Adam optimizer with a learning rate of $10^{-4}$, which is exponentially decayed every $2.5\cdot10^4$ iterations by a factor of 0.5.

The KL divergences were computed by drawing $10^6$ samples from the model and estimating the respective integrals with histograms.

\subsection{Further results}
\label{sec:aldp_further_results}

As additional performance metric to compare the models, we compute the Ramachandran plots, i.e. a 2D histogram of two dihedral angles. These plots are frequently used to analyse how proteins fold locally and are hence of high importance for many applications. Some Ramachandran plots are show in \autoref{fig:ramachandran_rnvp_fkld}. We also estimate the KL divergences of the ground truth Ramachandran plot obtained from the MD test set and the plots of the models by performing numerical integration with the histograms. The results are given in \autoref{tab:kld_ramachandran_rnvp_fkld}, \autoref{tab:aldp_kld_fkld_resflow}, and \autoref{tab:aldp_kld_rnvp_rkld}.

We also evaluated the stochastic normalizing flow model trained by \cite{Wu2020} through ML on our metrics. The median KL divergences of the marginals is $2.3\cdot10^{-3}$ while the mean is $2.6\cdot10^{-2}$, which is almost an order of magnitude higher that the results of the models with a resampled base distribution. However, the stochastic normalizing flow models the Ramachandran plot very well, where the KL divergence is only $2.4\cdot10^{-1}$. Note that these results have to be taken with a grain of salt, since \cite{Wu2020} used an augmented normalizing flow with less layers than we did. We tried to include their stochastic layers into our models but found training to be very unstable in this setting.

\begin{figure}[h]
    \centering
    \includegraphics[width=\linewidth]{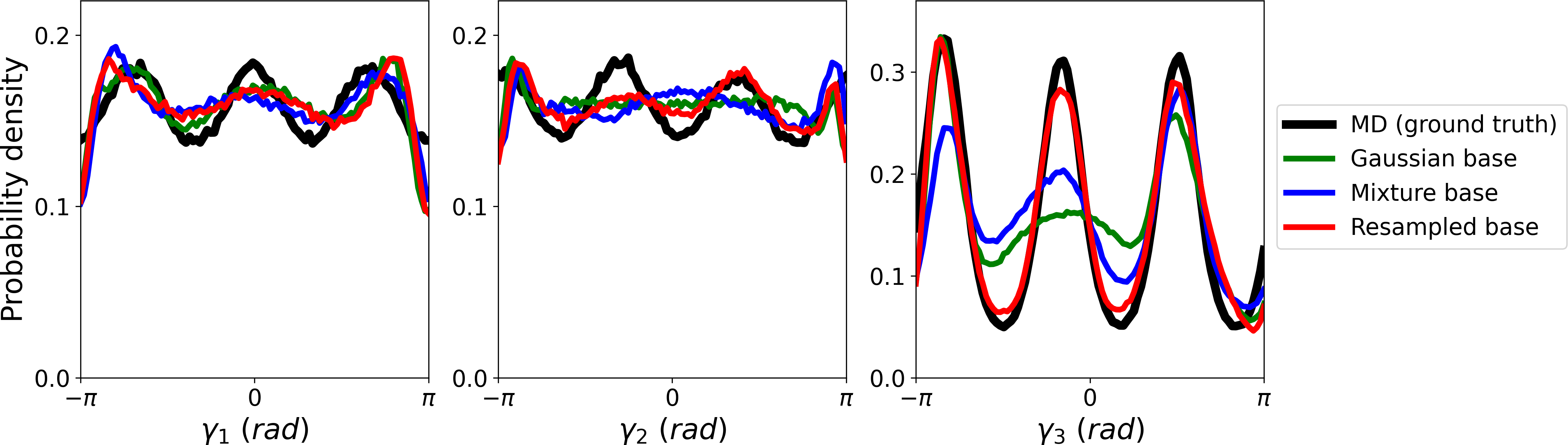}
    \caption{Marginal distribution of three dihedral angles of Alanine dipeptide. The ground truth was determined with a MD simulation. The flow models are based on the residual flow architecture and were trained via ML learning.}
    \label{fig:aldp_resflow_fkld}
\end{figure}

\begin{table}[h]
  \caption{KL divergence of the Ramachandran plots of the MD simulation, serving as a ground truth, and real NVP models trained via ML learning. It was estimated based on a histogram computed from $10^6$ samples.}
  \label{tab:kld_ramachandran_rnvp_fkld}
  \centering
  \vspace{0.3cm}
  \begin{tabular}{l|llll}
    Base distribution & Gaussian & Mixture & Gaussian & Resampled \\
    Number of layers & 16 & 16 & 19 & 16  \\
    \hline
    KL divergence & $4.79\pm0.73$ & $10.8\pm7.3$ & $\mathbf{2.26\pm0.27}$ & $3.00\pm0.36$ 
  \end{tabular}
\end{table}

\begin{table}[h]
  \caption{Quantitative comparison of the residual flow models approximating the Boltzmann distribution of Alanine dipeptide trained via ML learning. The LL is evaluated on a test set obtained with a MD simulation. The KL divergences of the 60 marginals were computed and the mean and median of them are reported. Moreover, the KL divergences of the Ramachandran plots are listed. All results are averages over 10 runs, the standard error is given, and highers LL as well as lowest KL divergences are marked in \textbf{bold}.}
  \label{tab:aldp_kld_fkld_resflow}
  \centering
  \vspace{0.3cm}
  \begin{tabular}{l|lll}
    Base distribution & Gaussian & Mixture & Resampled \\
    \hline
    LL $(\times 10^2)$ & $1.8048\pm0.0002$ & $1.8061\pm0.0002$ & $\mathbf{1.8144\pm0.0002}$ \\
    Mean KLD marginals $(\times 10^{-3})$ & $6.16\pm0.17$ & $31.5\pm1.8$ & $\mathbf{3.49\pm0.15}$ \\
    Median KLD marginals $(\times 10^{-4})$ & $5.21\pm0.12$ & $14.2\pm5.2$ & $\mathbf{4.67\pm0.05}$ \\
    KLD Ramachandran plot & $8.1\pm2.2$ & $25.4\pm10.2$ & $\mathbf{4.4\pm0.9}$
  \end{tabular}
\end{table}

\begin{table}[h]
  \caption{Quantitative comparison of the real NVP models approximating the Boltzmann distribution of Alanine dipeptide trained via the KL divergence. The LL is evaluated on a test set obtained with a MD simulation. The KL divergences of the 60 marginals were computed and the mean and median of them are reported. Moreover, the KL divergences of the Ramachandran plots are listed. All results are averages over 10 runs, the standard error is given, and highers LL as well as lowest KL divergences are marked in \textbf{bold}.}
  \label{tab:aldp_kld_rnvp_rkld}
  \centering
  \vspace{0.3cm}
  \begin{tabular}{l|lll}
    Base distribution & Gaussian & Mixture & Resampled \\
    \hline
    LL $(\times 10^2)$ & $-2.78\pm0.07$ & $-2.70\pm0.04$ & $\mathbf{-1.84\pm0.13}$ \\
    Mean KLD marginals $(\times 10^{-1})$ & $2.91\pm0.05$ & $2.98\pm0.02$ & $\mathbf{2.84\pm0.07}$ \\
    Median KLD marginals $(\times 10^{-3})$ & $4.75\pm0.04$ & $4.77\pm0.03$ & $\mathbf{4.66\pm0.05}$ \\
    KLD Ramachandran plot & $7.63\pm0.18$ & $16.6\pm8.4$ & $\mathbf{6.92\pm0.37}$
  \end{tabular}
\end{table}

\begin{figure}[h]
	\centering
	\subfloat[Real NVP, Gaussian base, 16 layers]{
		\includegraphics[width=0.4\linewidth]{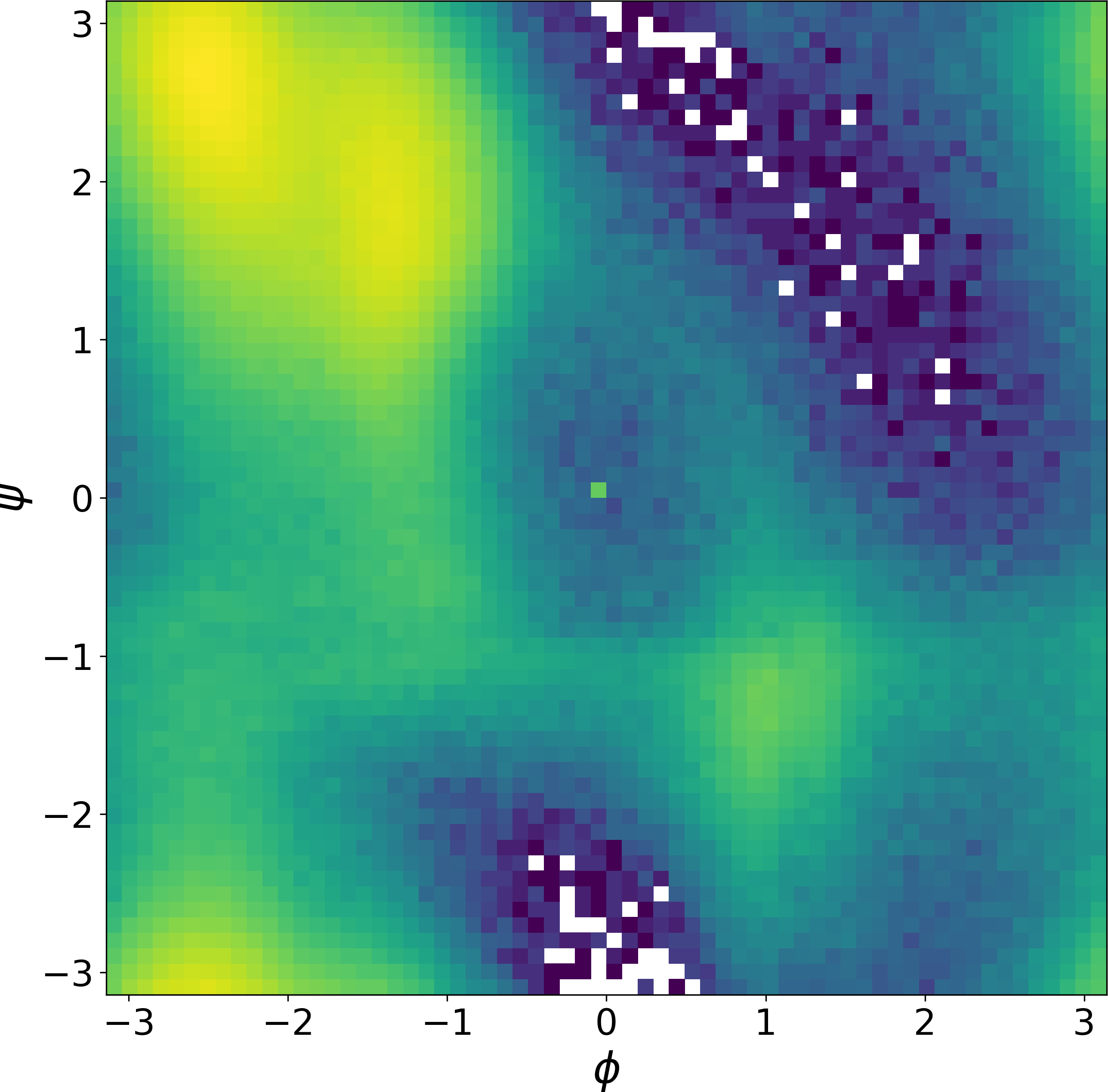}
	}
	\hfil
	\subfloat[Real NVP, Gaussian mixture base, 16 layers]{
		\includegraphics[width=0.4\linewidth]{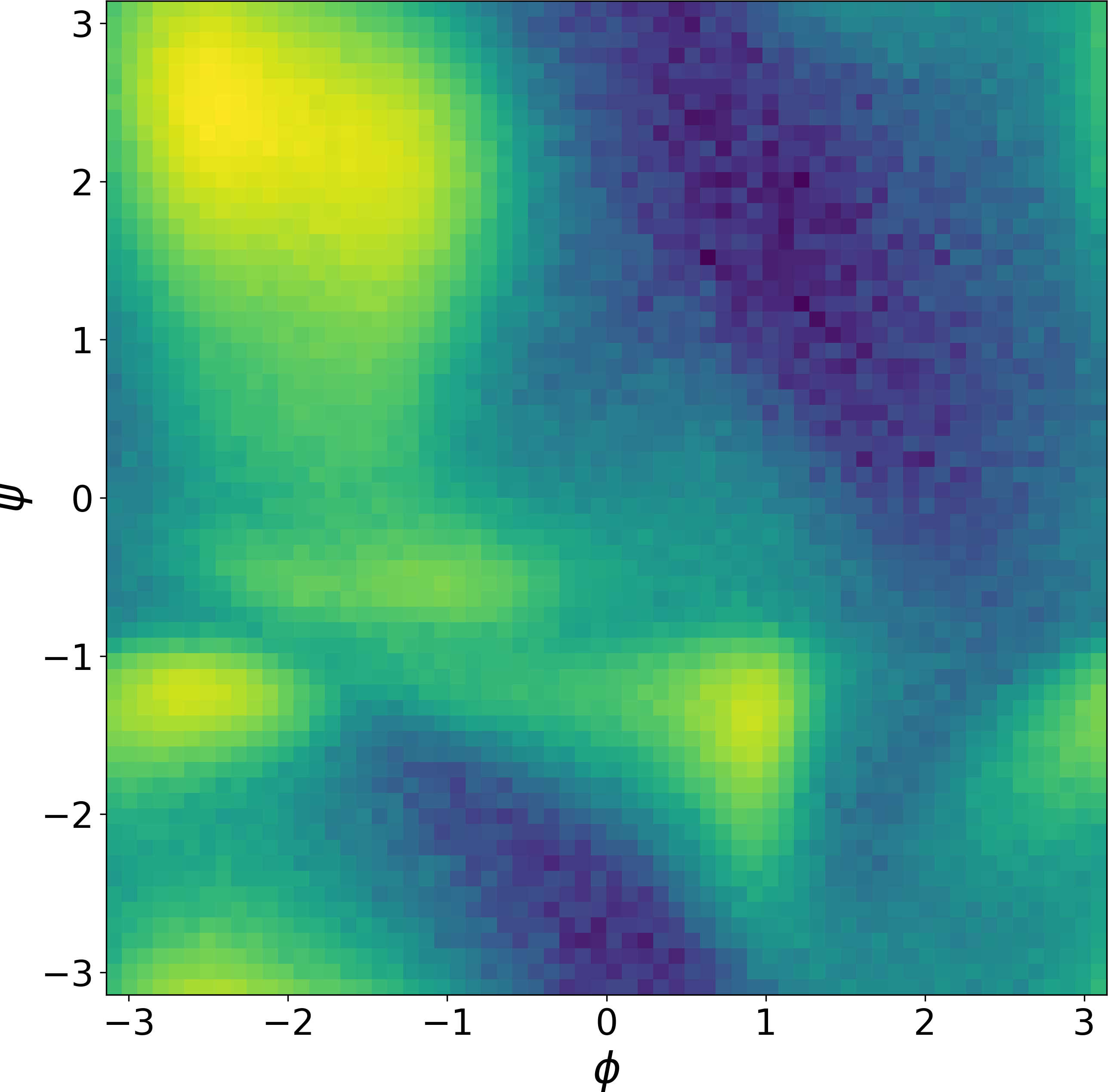}
	}
	\\
	\subfloat[Real NVP, Gaussian base, 19 layers]{
		\includegraphics[width=0.4\linewidth]{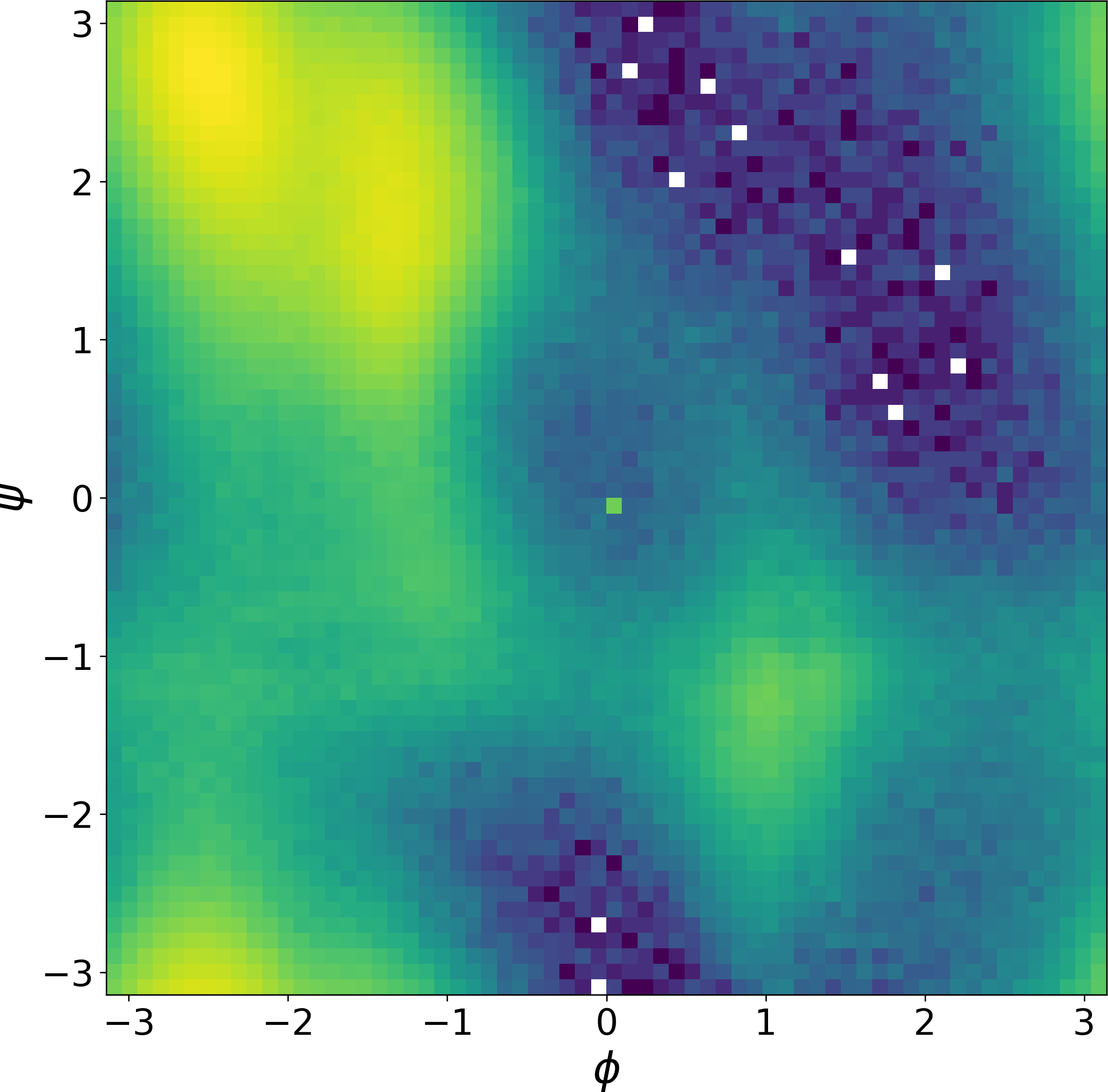}
	}
	\hfil
	\subfloat[Real NVP, Resampled base, 16 layers]{
		\includegraphics[width=0.4\linewidth]{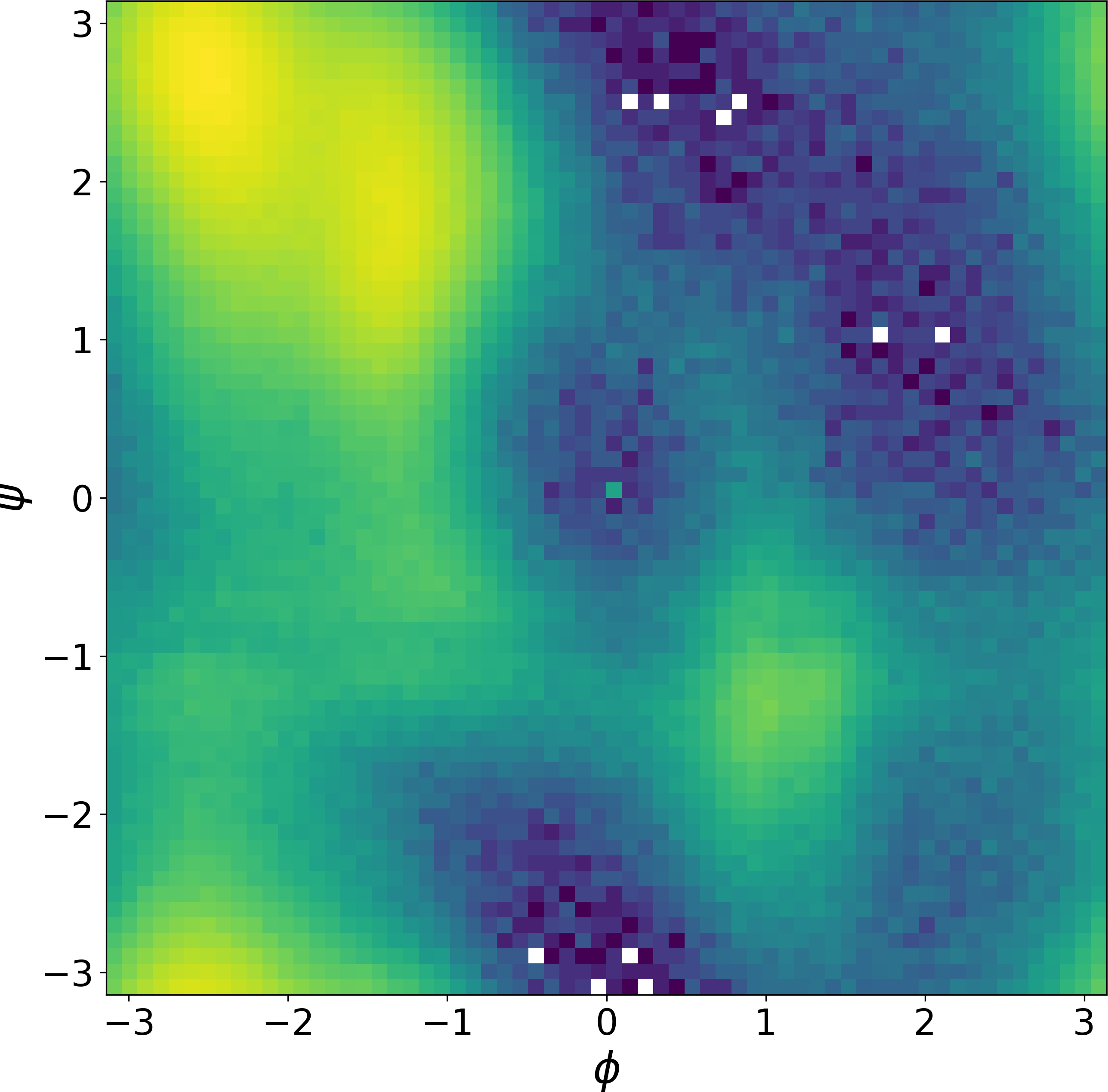}
	}
	\\
	\subfloat[Ground truth (MD simulation)]{
		\includegraphics[width=0.4\linewidth]{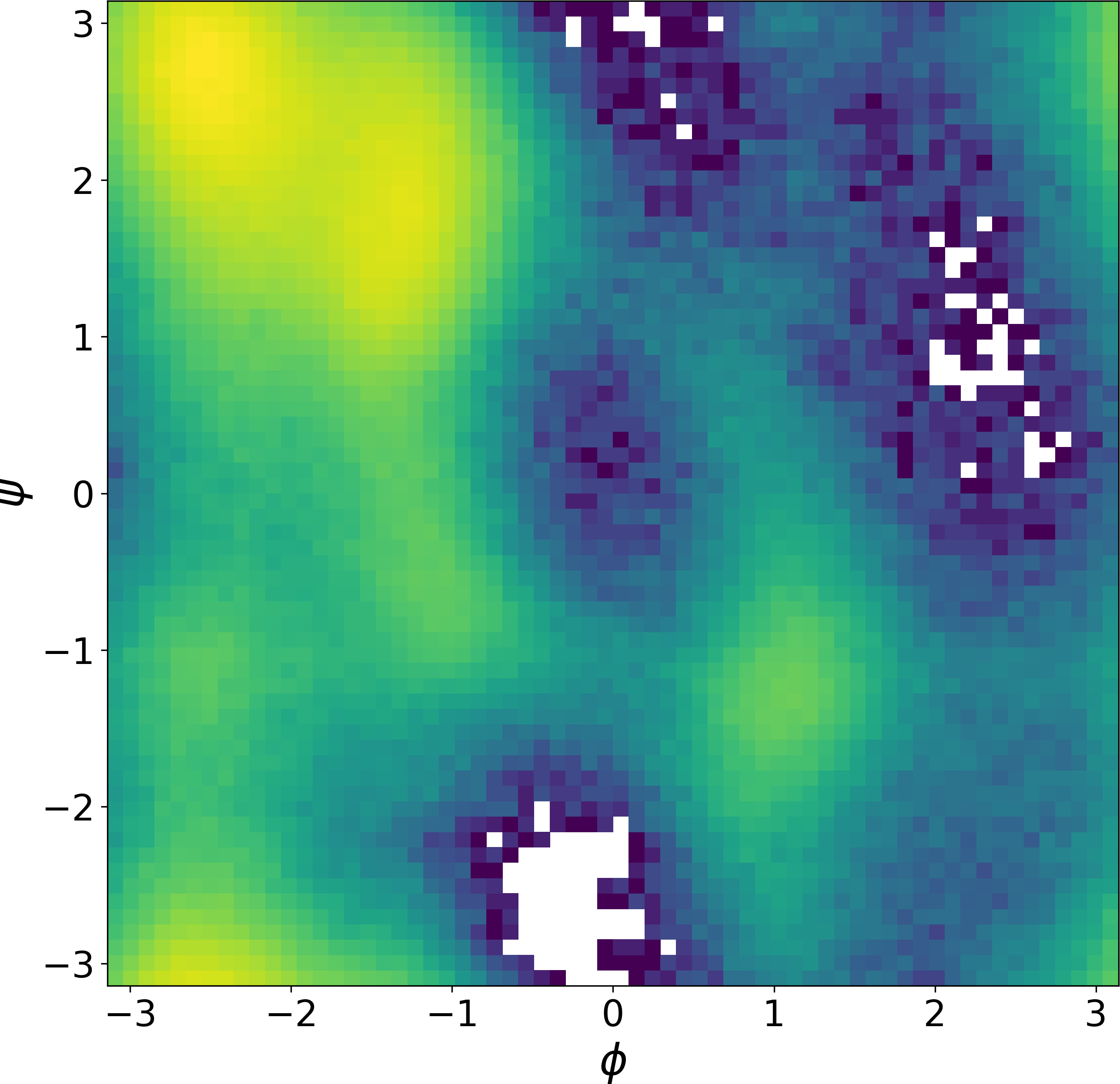}
	}
	\caption{Ramachandran plots of Alanine dipeptide. The flow models were trained via ML learning}
	\label{fig:ramachandran_rnvp_fkld}
\end{figure}

\end{document}